\newcolumntype{L}{>{$}l<{$}} 
\newcommand{\vertiii}[1]{{\left\vert\kern-0.25ex\left\vert\kern-0.25ex\left\vert #1 
		\right\vert\kern-0.25ex\right\vert\kern-0.25ex\right\vert}}
\numberwithin{equation}{section}
\newtheorem{lemma}{Lemma}
\newtheorem{definition}{Definition}
\newtheorem{theorem}{Theorem}
\newtheorem{remark}{Remark}
\numberwithin{lemma}{section}
\numberwithin{theorem}{section}
\numberwithin{definition}{section}
\numberwithin{remark}{section}
\newtheorem{corollary}{Corollary}
\numberwithin{corollary}{section}
	\title{On the Correctness and Sample Complexity of Inverse Reinforcement Learning}
\author[1]{\textbf{Abi Komanduru}}
\author[1]{\textbf{Jean Honorio}}
\affil[1]{Purdue University, West Lafayette IN 47906}
\date{}
\begin{document}

\maketitle

\begin{abstract}
	Inverse reinforcement learning (IRL) is the problem of finding a reward function that generates a given optimal policy for a given Markov Decision Process. This paper looks at  an algorithmic-independent geometric analysis of the IRL problem with finite states and actions. A L1-regularized Support Vector Machine formulation of the IRL problem motivated by the geometric analysis is then proposed with the basic objective of the inverse reinforcement problem in mind: to find a reward function that generates a specified optimal policy. The paper further analyzes the proposed formulation of inverse reinforcement learning with $n$ states and $k$ actions, and shows a sample complexity of $O(n^2 \log (nk))$ for recovering a reward function that generates a policy that satisfies Bellman's optimality condition with respect to the true transition probabilities.
\end{abstract}
\section{Introduction}
Reinforcement Learning is the process of generating an optimal policy for a given Markov Decision Process along with a reward function. Often, in situations including apprenticeship learning, the reward function is unknown but optimal policy can be observed through the actions of an \textit{expert}. In cases such as these, it is desirable to learn a reward function generating the observed optimal policy. This problem is referred to as Inverse Reinforcement Learning (IRL) \cite{IRLNg}. It is well known that such a reward function is not necessarily unique. Various algorithms to solve the IRL problem have been made including linear programming \cite{IRLNg} and Bayesian estimation \cite{RamaBayes}. \cite{AbbeelNg} looked at using IRL to solve the apprenticeship learning problem by trying to find a reward function that maximizes the margin of the expert's policy. The goal of the problem presented in \cite{AbbeelNg} is to find a policy that comes close in value to the expert's policy for some unspecified true reward function. However none of the prior works provide a formal guarantee that the reward function obtained from the empirical data is optimal for the true transition probabilities in inverse reinforcement learning.

This paper looks at formulating the IRL problem by using the basic objective of inverse reinforcement: to find a reward function that generates a specified optimal policy. The paper also looks at establishing a sample complexity to meet this basic goal when the transition probabilities are estimated from observed trajectories. To achieve this, an algorithmic-independent geometric analysis of the IRL problem with finite states and actions as presented in \cite{IRLNg} is provided. A L1-regularized Support Vector Machine (SVM) formulation of the IRL problem motivated by the geometric analysis is then proposed. Theoretical analysis of the sample complexity of the L1 SVM formulation is then performed. Finally, experimental results comparing the L1 SVM formulation to the linear programming formulation presented in \cite{IRLNg} are presented showing the improved performance of the L1 SVM formulation with respect to the basic objective, i.e Bellman optimality with respect to the true transition probabilities. To the best of our knowledge, we are the first to provide an algorithm with formal guarantees for \emph{inverse} reinforcement learning.
\section{Preliminaries}\label{PFSec}

The formulation of the IRL problem is based on a  Markov Decision Process (MDP) $(S,A,\lbrace P_{sa}\rbrace, \gamma, R)$, where
~\\
\begin{itemize}

\item $S$ is a finite set of $n$ states.
\item $A = \lbrace a_1, \ldots, a_k\rbrace$ is a set of $k$ actions.
\item $ P_{a} \in \left[0,1\right]^{n\times n} $ are the state transition probabilities for action $a$. We use $ P_{a}(s) \in \left[0,1\right]^{n}$ and $P_{sa} \in \left[0,1\right]^{n}$ to represent the state transition probabilities for action $a$ in state $s$ and $P_a(i,j)\in \left[0,1\right]$ to represent the probability of going from state $i$ to state $j$ when taking action $a$.
\item $\gamma \in [0,1]$ is the discount factor.
\item $R: S\to\mathbb{R}$ is the reinforcement or reward function.
\end{itemize}
It is important to note that the state transition probability matrices are right stochastic. Mathematically this can be stated as
$P_a(i,j) \ge 0 \; \forall \; i,j \;\;\; \text{and}\;\;\;\sum_{j} P_a (i,j) = 1 \; \forall i$

In the subsequent sections, we use the notation
\[
F_{ai} :=(P_{a_1}(i) - P_a(i) )(I-\gamma P_{a_1} )^{-1}
\]

The empirical maximum likelihood estimates of the transition probabilities from sampled trajectories are denoted by $\hat{P}_a(i)$ and in a similar fashion we use the notation 

\[
\hat{F}_{ai} :=(\hat{P}_{a_1}(i) - \hat{P}_a(i) )(I-\gamma \hat{P}_{a_1} )^{-1}
\]

The following norms are used throughout this paper. The infinity norm of a matrix $A=[a_{ij}]$ is defined as $\rVert A \lVert_\infty = \sup_{i,j}  \vert a_{ij} \vert$. The $L^1$ norm of a vector is defined as $\lVert b\rVert_1 = \sum_{i} \vert b_i \vert$. The \textit{induced matrix norm} is defined as $\vertiii{A}_\infty = \sup_{j} \lVert a_j \rVert_1$
where $a_j$ is the $j$-th row of the matrix $A$. Note that for a right stochastic matrix $P$, we can see that
$\vertiii{P}_\infty =1$ and $\rVert P \lVert_\infty \le 1$.

In this paper the reward function is assumed to be a function of purely the state instead of the state and the action. This assumption is also made for the initial results in \cite{IRLNg}. A policy is defined as a map $\pi : S\to A$. Given a policy $\pi$, we can define two functions.\\~\\
The \textit{value function} at a state $s_1$ is defined as 
\[
V^{\pi}(s_1) = \mathbb{E}\bigl[R(s_1)+\gamma R(\pi(s_1)) +\\ \gamma^2 R(\pi(\pi(s_1 ))) + \ldots \mid \pi \bigr]
\]
The \textit{$Q$ function} is defined as
$$
Q^{\pi}(s,a)  =  R(s) + \gamma \mathbb{E}_{s'\sim P_{a}(s)}[V^\pi (s')]
$$
\\
From \cite{IRLNg}, the inverse reinforcement learning problem for finite states and actions can be formulated as the following linear programming problem, assuming without loss of generality that $\pi^* \equiv a_1$. By enforcing the Bellman optimality of the policy $\pi^*$, linear constraints on the reward function are formed.  \cite{IRLNg} then suggest some "natural" criteria that then forms the basis of the objective function to be minimized to obtained the desired reward function. The formulation presented in \cite{IRLNg} is as follows.

\begin{equation}\label{NgProblem}
\begin{aligned}
 &\underset{R}{\text{maximize}} & &\sum_{i=1}^{N} \min_{a\in \lbrace a_2,\ldots, a_k\rbrace} \bigl(
\hat{F}_{ai}^T R\bigr) - \lambda \|R\|_1\\
&\text{subject to}& &\hat{F}_{ai}^T R \ge 0 \;\; \forall a\in A\setminus a_1, i = 1,\ldots, n\\
&&&\|R\|_\infty\le R_{max}
\end{aligned}
\end{equation}

\section{Geometric analysis of the IRL problem}
The objective of the Inverse Reinforcement Learning problem is to find a reward function that generates an optimal policy. As shown in \cite{IRLNg}, the necessary and sufficient conditions for a policy $\pi^*$ (without loss of generality $\pi^* \equiv a_1$) to be optimal are given by the Bellman Optimality principle and can be stated mathematically as 

$$
F_{ai}^T R \ge 0 \;\;\; \forall a\in A\setminus a_1, i = 1,\ldots, n
$$
Clearly, $R=0$ is always a solution. However this solution is degenerate in the sense that it also allows any and every other policy to be "optimal" and as a result is not of practical use. If the constraint of $R\ne0$ is considered, then by noticing that the points $F_{ai} \in \mathbb{R}^n$, the set of reward functions generating the optimal policy $\pi_1$ is then the set of hyperplanes passing through the origin for which the entire collection of points $\left\{F_{ai}\right\}$ lie in one half space. The problem of Inverse Reinforcement Learning, then is equivalent to the problem of finding such a separating hyperplane passing through the origin for the points $\left\{F_{ai}\right\}$. Here we also assume none of the $F_{ai} = 0$ as this would mean that there is no distinction between the policies $\pi = a$ and $\pi_1 = a_1$.

This geometric perspective of the IRL problem allows the classification of all finite state, finite action IRL problems into 3 regimes, graphically visualized in Figure \ref{fig:regime_images}: 

\begin{figure}[]
\minipage{0.32\textwidth}
\includegraphics[width=\linewidth]{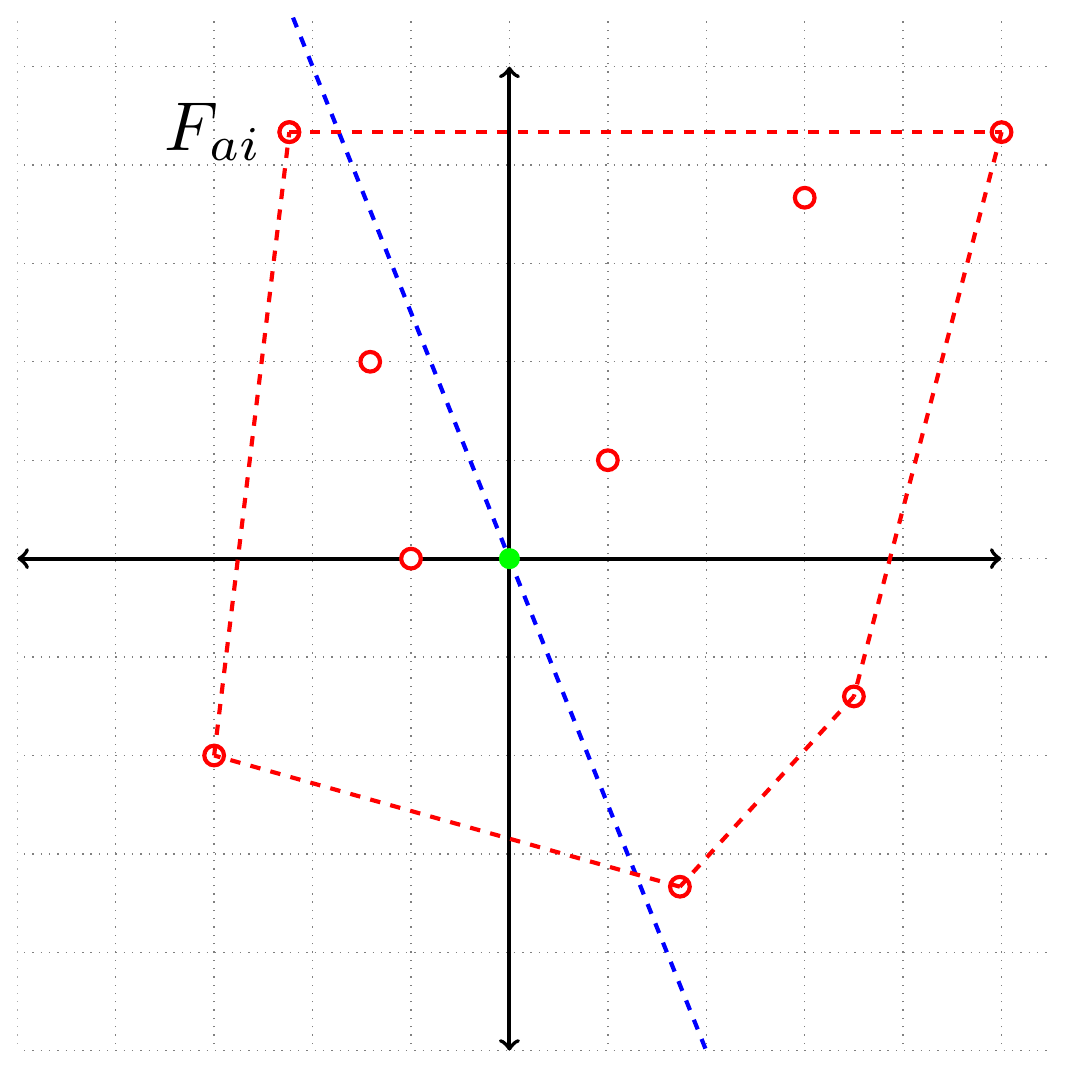}

\endminipage\hfill
\minipage{0.32\textwidth}
\includegraphics[width=\linewidth]{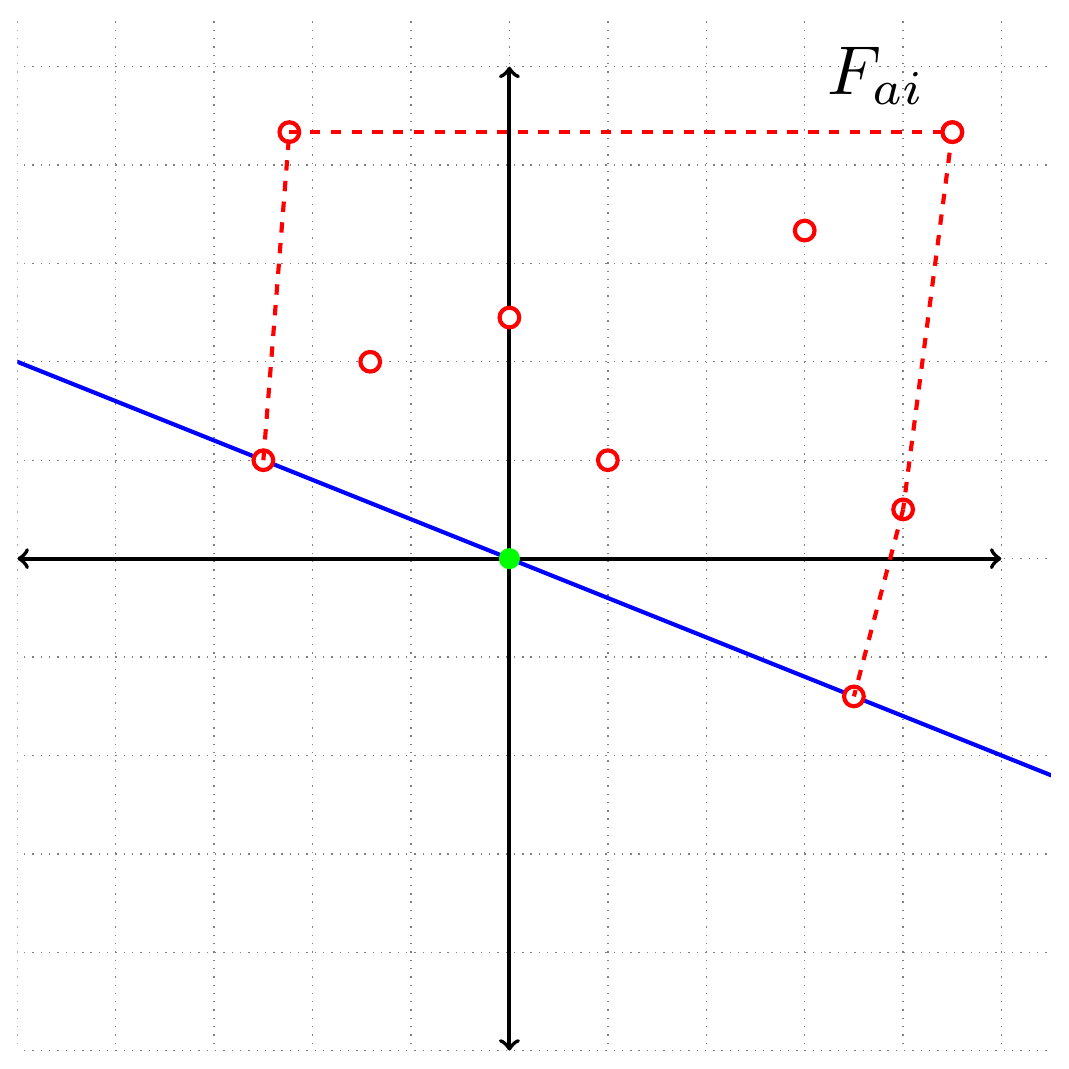}

\endminipage\hfill
\minipage{0.32\textwidth}%
\includegraphics[width=\linewidth]{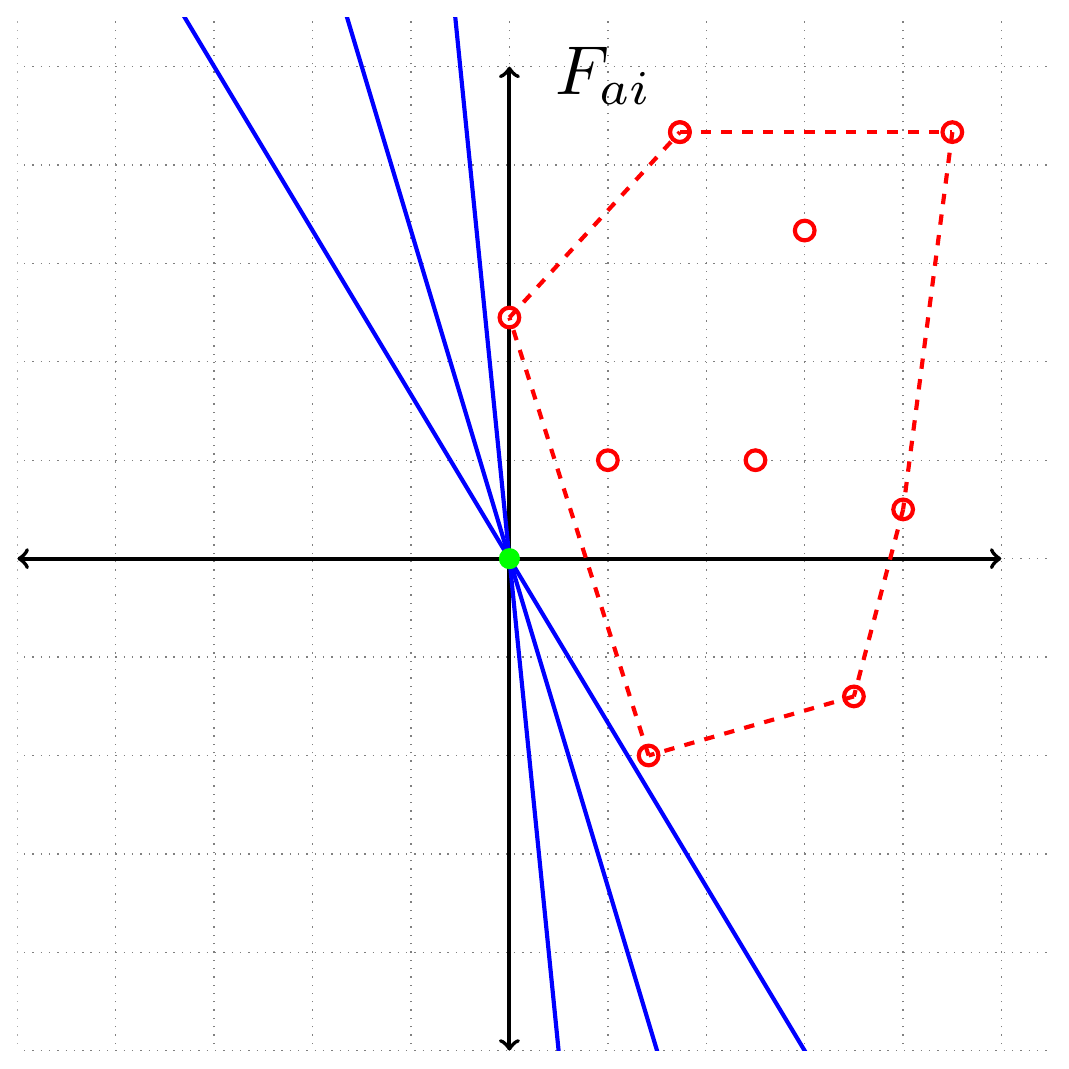}

\endminipage
\caption{\textbf{Left:} An example graphical visualization of Regime 1 where the origin lies inside the convex hull of $\left\{F_{ai}\right\}$. Here no hyperplane passing through the origin exists for which all the points $\left\{F_{ai}\right\}$ lie in one half space. \textbf{Center:} An example graphical visualization of Regime 2 where the origin lies on the boundary of the convex hull of $\left\{F_{ai}\right\}$. Here only one hyperplane passing through the origin exists for which all the points $\left\{F_{ai}\right\}$ lie in one half space. \textbf{Right:} An example graphical visualization of Regime 3 where the origin lies outside the convex hull of $\left\{F_{ai}\right\}$. Here infinitely many hyperplanes passing through the origin exist for which all the points $\left\{F_{ai}\right\}$ lie in one half space. }\label{fig:regime_images}
\end{figure}

\textbf{Regime 1}: In this regime, there is no hyperplane passing through the origin for which all the points $\left\{F_{ai}\right\}$ lie in one half space. This is equivalent to saying that the origin is in the interior of the convex hull of the points $\left\{F_{ai}\right\}$. In this case, independent of the algorithm, there is no nonzero reward function for which the policy $\pi_1$ is optimal. 

\textbf{Regime 2}: In this regime, up to scaling by a constant, there can be one or more hyperplanes passing through the origin for which all the points $\left\{F_{ai}\right\}$ lie in one half space, however the hyperplanes always contain one of the points $\left\{F_{ai}\right\}$. This is equivalent to saying that the origin is on the boundary of the convex hull of the points $\left\{F_{ai}\right\}$ but is not one of the vertices since by assumption $F_{ai} \ne 0$. In this case, up to a constant scaling, there are one or more nonzero reward functions that generates the optimal policy $\pi_1$. In this case, it is also important to notice that the policy $\pi_1$ cannot be strictly optimal for any of the reward functions. 

\textbf{Regime 3}: In this regime, up to scaling by a constant, there are infinitely many hyperplanes passing through the origin for which all the points $\left\{F_{ai}\right\}$ lie in one half space. This is equivalent to saying that the origin is outside the convex hull of the points $\left\{F_{ai}\right\}$. In this case, up to a constant scaling, there are infinitely many nonzero reward functions that generates the optimal policy $\pi_1$ and it is possible to find a reward function for which the policy $\pi_1$ is \textit{strictly} optimal.

These geometric regimes and their implication on the finite state, finite action inverse reinforcement learning problem are summed up in the following theorem.
\begin{theorem}\label{GeoThm}
There exists a hyperplane passing through the origin such that all the points $\{F_{ai}\}$ lie on one side of the hyperplane (or on the hyperplane) if and only if there is a non-zero reward function $R\ne0$ that generates the optimal policy $\pi = a_1$ for the inverse reinforcement learning problem $\{S,A,P_a,\gamma\}$. i.e. $\exists R$ such that $F_{ai}^T R\ge 0 $ $\forall a,i$.
\end{theorem}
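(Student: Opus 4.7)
The statement is essentially a bridge between three equivalent conditions: (i) a separating hyperplane through the origin exists for the point cloud $\{F_{ai}\}$, (ii) a nonzero vector $R$ satisfies $F_{ai}^T R \ge 0$ for all $a,i$, and (iii) $R$ is a nonzero reward function under which $\pi=a_1$ is Bellman-optimal. My plan is to prove (i)$\Leftrightarrow$(ii) directly from the definition of a hyperplane through the origin, and then invoke the Bellman optimality characterization from \cite{IRLNg} (already recalled in Section~\ref{PFSec} and at the start of this section) to identify (ii) with (iii).

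For the forward direction, I would start from a hyperplane $H$ through the origin containing $\{F_{ai}\}$ in one closed half-space. Any such $H$ has the form $H = \{x\in\mathbb{R}^n : x^T v = 0\}$ for some nonzero $v\in\mathbb{R}^n$, and the two closed half-spaces are $\{x : x^T v \ge 0\}$ and $\{x : x^T v \le 0\}$. By replacing $v$ with $-v$ if necessary (which yields the same hyperplane), I may assume $F_{ai}^T v \ge 0$ for every $a\in A\setminus\{a_1\}$ and every state $i$. Setting $R := v$ then produces a nonzero reward vector satisfying the Bellman optimality inequalities stated at the beginning of the section, so $\pi = a_1$ is optimal under $R$.

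For the reverse direction, suppose $R \neq 0$ renders $\pi = a_1$ optimal for the MDP $\{S,A,P_a,\gamma\}$. The Bellman optimality conditions recalled from \cite{IRLNg} give exactly $F_{ai}^T R \ge 0$ for all $a\in A\setminus\{a_1\}$ and all $i$. Because $R\ne 0$, the set $H_R := \{x\in\mathbb{R}^n : x^T R = 0\}$ is a well-defined hyperplane through the origin, and the inequalities above say precisely that every point $F_{ai}$ lies in the closed half-space $\{x : x^T R \ge 0\}$, which is one side of $H_R$. This completes the equivalence.

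I do not expect any genuine obstacle here: once the Bellman optimality condition is written in the form $F_{ai}^T R \ge 0$ (which is imported from \cite{IRLNg} and is the non-trivial ingredient), the theorem is essentially a definitional unpacking of what a ``hyperplane through the origin'' and a ``half-space'' mean. The only subtlety worth being explicit about is the sign flip $v\mapsto -v$ in the forward direction, which is needed because the hyperplane itself does not distinguish the two possible orientations of its normal.
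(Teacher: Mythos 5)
Your proposal is correct and follows essentially the same route as the paper's proof: identify the hyperplane normal $w$ with the reward vector $R$, use the sign flip $v \mapsto -v$ to fix the orientation in one direction, and invoke the Bellman optimality characterization $F_{ai}^T R \ge 0$ from \cite{IRLNg} to translate between optimality of $\pi = a_1$ and membership of the points $\{F_{ai}\}$ in a closed half-space. Your treatment is, if anything, slightly more explicit than the paper's about why $R \ne 0$ (the normal of a hyperplane is nonzero) and about the sign-flip step.
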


\begin{remark}
	Notice that as an extension of Theorem \ref{GeoThm}, there is an $R$ for which the policy $\pi = a_1$ is strictly optimal iff there exists a hyperplane for which all the points $\{F_{ai}\}$ are strictly on one side.
\end{remark}

\begin{remark}
	Note that it is possible to find a separating hyperplane between the origin and the collection of points  $\left\{F_{ai}\right\}$ if and only if the problem is in Regime 3. Therefore, the problem of inverse reinforcement learning can be viewed as a one class support vector machine (or as a two class support vector machine with the origin as the negative class) problem in this regime. This, along with the objective of determining sample complexity, leads in to the formulation of the problem discussed in the next section.
\end{remark}
\section{Formulation of optimization problem}

The objective function formulation of the inverse reinforcement problem described in \cite{IRLNg} was formed by imposing the conditions that the value from the optimal policy was as far as possible from the next best action at each state, as well as sparseness of the reward function. These were choices made by the authors to enable a unique solution to the proposed linear programming problem. We propose a different formulation in terms of a 1 class L1-regularized support vector machine that allows for a geometric interpretation as well as provides an efficient sample complexity. The Inverse Reinforcement Learning problem is now considered in Regime 3. Here it is known that there is a separating hyperplane between the origin and $\left\{F_{ai}\right\}$ so the strict inequality $F_{ai}^TR>0$ which by scaling of $R$ is equivalent to $F_{ai}^TR\ge1$. Formally this assumption is stated as follows 

\begin{definition}[\textbf{$\beta$-Strict Separability}]
An inverse reinforcement learning problem $\{S,A,P_a,\gamma\}$ satisfies \textit{$\beta$-strict separability} if and only if there exists a $\{\beta,R^*\}$ such that \[\lVert R^*\rVert_1=1 \;\;\; \text{and} \;\;\;
F_{ai}^T R^* \ge \beta > 0 \;\;\; \forall a\in A\setminus a_1, i = 1,\ldots, n\]
\end{definition}

Notice that the IRL problem is in Regime 3 (i.e. $\exists w$ such that $w^TF_{ai}>0$) if and only if the strict separability assumption is satisfied.

 Strict nonzero assumptions are well-accepted in the statistical learning theory community, and have been used for instance in compressed sensing \cite{wain2009}, Markov random fields \cite{ravi2010}, nonparametric regression \cite{liu2008spam}, diffusion networks \cite{danesh2014}.

\begin{figure}[H]
	\centering
	\includegraphics[width=0.33\linewidth]{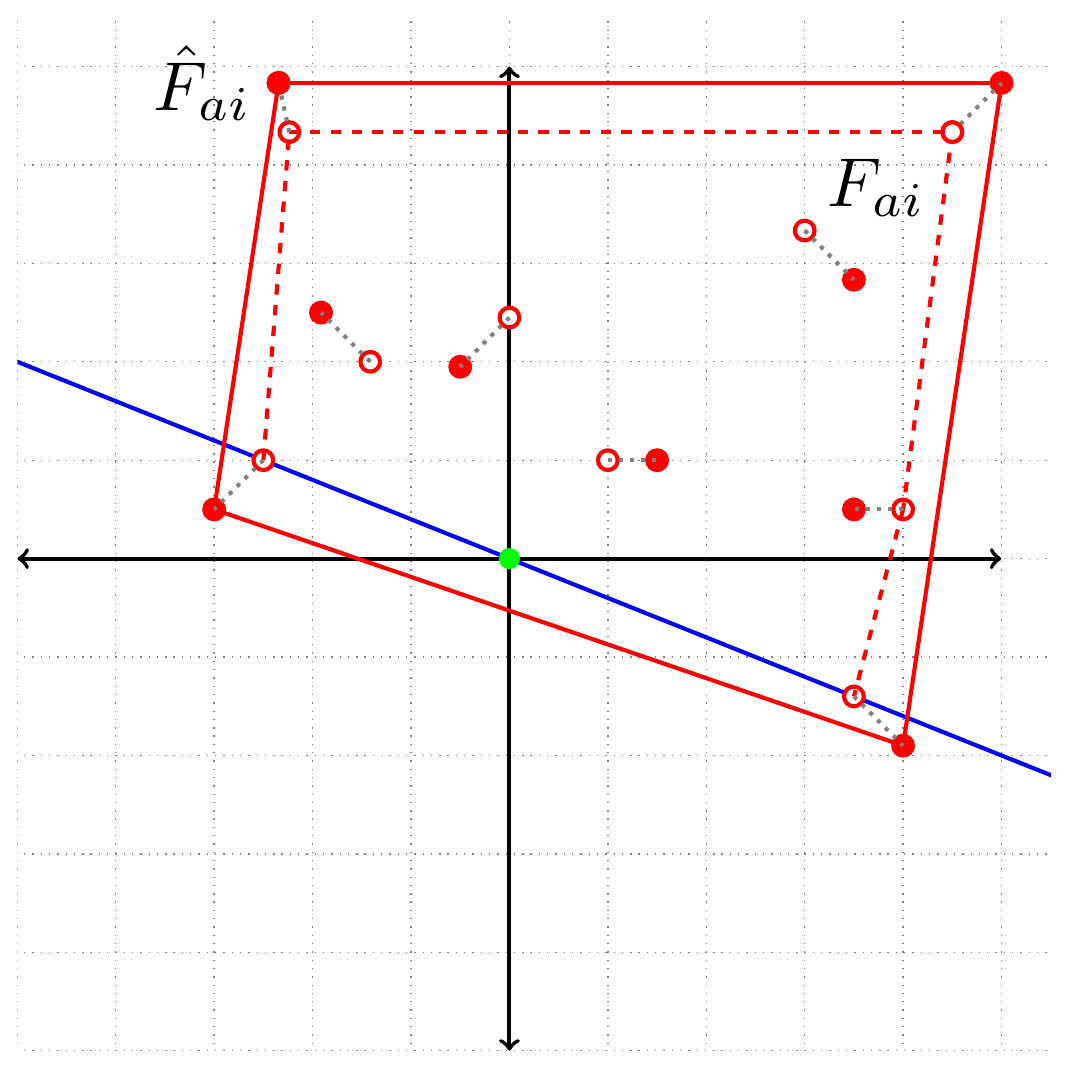}
	
	\caption{ An example graphical visualization of Regime 2 the origin lies on the boundary of the convex hull of $\left\{F_{ai}\right\}$. Perturbation from statistical estimation of the transition probability matrices from empirical data (solid red), makes the problem easily tip into Regime 1 (shown) or Regime 3. An infinite number of samples would be required to solve IRL problems falling into Regime 2. }\label{fig:regime_perturb}
\end{figure}

Problems in Regime 2 are avoided since based on the statistical estimation of the transition probability matrices from empirical data, the problem can easily tip into Regime 1 or Regime 3, as shown in Figure \ref{fig:regime_perturb}. To solve problems in Regime 2, an infinite number of samples would be required, where as problems in Regime 3 can be solved with a large enough number of samples.

Given the strict separability assumption, the optimization problem proposed is as follows  
\begin{equation} \label{P1Problem}
\begin{aligned}
&\underset{R}{\text{maximize}} & &\lVert R  \rVert_1\\
&\text{subject to}& &\hat{F}_{ai}^T R \ge 1 \;\;\; \forall a\in A\setminus a_1 \; \; i=1,\ldots, n
\end{aligned}
\end{equation}

This problem is in the form of a one class L1-regularized Support Vector Machine \cite{zhu2004L1} except that we use hard margins instead of soft margins. The minimization of the $L^1$ norm plays a two fold role in this formulation. First, it promotes a sparse reward function, keeping in lines with the idea of simplicity. Second, it also plays a role in establishing the sample complexity bounds of the inverse reinforcement learning problem as well, as shown in the subsequent section. The constraints derive from strict Bellman optimality in the separable case (Regime 3) of inverse reinforcement learning and help avoid the degenerate solution of $R=0$. We now use this optimization problem along with the objective of finding a reward function for which the policy $\pi = a_1$ is optimal to establish the correctness and sample complexity of the inverse reinforcement learning problem.

\section{Correctness and sample complexity of Inverse Reinforcement Learning}

Consider the inverse reinforcement learning problem in the strictly separable case (Regime 3). We have $\exists \{\beta,R^*\}$ such that 
\[
F_{ai}^T R^* \ge \beta > 0 \;\;\; \forall a\in A\setminus a_1, i = 1,\ldots, n
\]

Let $\lVert F_{ai} - \hat{F}_{ai}\rVert_\infty \le \varepsilon$. 
Let $\hat{R}$ be the solution to the optimization problem \ref{P1Problem} with $\hat{F}_{ai}$. We desire that \[F_{ai}^T \hat{R} \ge 0 \;\;\; \forall a\in A\setminus a_1, i = 1,\ldots, n\] i.e. the reward we obtain from the problem using the estimated transition probability matrices also generates $\pi = a_1$ as the optimal for the problem with the true transition probabilities. This can be done by reducing $\varepsilon$, i.e. by using more samples. The result in the strictly separable case follows from the following theorem. 

\begin{theorem}\label{ThmEpsBeta}
	Let $\{S,A,P_a,\gamma\}$ be an inverse reinforcement learning problem that is $\beta$- strictly separable. Let $\hat{F}_{ai}$ be the values of $F_{ai}$ using estimates of the transition probability matrices such that $\lVert F_{ai} - \hat{F}_{ai}\rVert_\infty \le \varepsilon$. Let $\hat{R}$ be the solution to the optimization problem \ref{P1Problem} with $\hat{F}_{ai}$. Let $1\ge c\ge0$ 
	\[
	\varepsilon \le \frac{1-c}{2-c} \beta
	\]
	Then we have $F_{ai}^T \hat{R}\ge c \;\;\; \forall a\in A\setminus a_1, i = 1,\ldots, n$.
\end{theorem}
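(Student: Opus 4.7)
The plan is to bound the $\ell^1$ norm of $\hat R$ by comparing it against a suitably rescaled $R^*$, and then transfer the empirical constraint $\hat F_{ai}^T \hat R \ge 1$ back to the true constraint via $\ell^1/\ell^\infty$ duality. I read the program \eqref{P1Problem} as a \emph{minimization} of $\|R\|_1$ (the surrounding discussion of the $L^1$ regularizer, and the fact that maximization is unbounded on the feasible set, both indicate a typo in the display). The minimum-norm property of $\hat R$ is essential; without it, one cannot control the perturbation term.

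First I would establish the basic perturbation estimate. By H\"older's inequality applied coordinate-wise,
\[
|F_{ai}^T R - \hat F_{ai}^T R| \;\le\; \|F_{ai} - \hat F_{ai}\|_\infty \, \|R\|_1 \;\le\; \varepsilon\, \|R\|_1,
\]
for every $R$ and every pair $(a,i)$. This is the sole tool tying the true and empirical quantities together, and it is the reason the $L^1$ objective is the right one to pair with an $L^\infty$-type perturbation bound.

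Next I would use this to bound $\|\hat R\|_1$. Applying the estimate to $R^*$, whose $\ell^1$ norm is one, yields $\hat F_{ai}^T R^* \ge \beta - \varepsilon$; the hypothesis $\varepsilon \le \tfrac{1-c}{2-c}\beta$ certainly implies $\varepsilon < \beta$, so $\beta - \varepsilon > 0$ and $\tilde R := R^*/(\beta - \varepsilon)$ is feasible for the empirical program. Because $\hat R$ has minimum $\ell^1$ norm among feasible points,
\[
\|\hat R\|_1 \;\le\; \|\tilde R\|_1 \;=\; \frac{1}{\beta - \varepsilon}.
\]

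Finally I would combine the two facts. Applying the perturbation estimate to $\hat R$ together with the empirical constraint $\hat F_{ai}^T \hat R \ge 1$ gives
\[
F_{ai}^T \hat R \;\ge\; \hat F_{ai}^T \hat R - \varepsilon \, \|\hat R\|_1 \;\ge\; 1 - \frac{\varepsilon}{\beta - \varepsilon}.
\]
A routine rearrangement shows $\varepsilon \le \tfrac{1-c}{2-c}\beta$ is equivalent to $1 - \tfrac{\varepsilon}{\beta - \varepsilon} \ge c$, closing the proof. The only genuine step of substance is the minimum-norm comparison in the second paragraph; everything else is duality and elementary algebra, so I do not anticipate a serious obstacle beyond correctly interpreting the sign of the objective in \eqref{P1Problem}.
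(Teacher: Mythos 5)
Your proof is correct and follows essentially the same route as the paper's: the same H\"older perturbation bound, the same feasible comparison point $\tilde R = R^*/(\beta-\varepsilon)$ (which the paper writes as $\tfrac{K}{\beta}R^*$ with $K = (1-\varepsilon/\beta)^{-1}$), and the same minimum-norm comparison followed by the same algebra. Your reading of Problem \ref{P1Problem} as a minimization is also the right one --- the display says ``maximize,'' but the paper's own proof invokes $\lVert \hat R\rVert_1 \le \lVert \tilde R\rVert_1$ from the optimality of $\hat R$, which only makes sense if the objective is minimized.
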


\begin{proof}
	Consider $F_{ai}^T \hat{R} \ge 0$, using H\"{o}lder's inequality we have 

\begin{equation}\label{EqWit}
F_{ai}^T \hat{R} \ge - \lVert F_{ai} - \hat{F}_{ai}\rVert_\infty \lVert \hat{R} \rVert_1 + \hat{F}_{ai}^T \hat{R}\ge -\varepsilon \lVert \hat{R} \rVert_1 + 1 
\end{equation}

Now let $\tilde{R} = \frac{K}{\beta} R^*$ where $K>0$ and $R^*$ is the reward satisfying the $\beta$-strict separability for the problem. We have $\lVert \tilde{R} \rVert_1 = \frac{K}{\beta} \lVert R^*\rVert_1 =  \frac{K}{\beta}$ as well as $F_{ai}^T \tilde{R} \ge K$. Now we have 
\[
\hat{F}_{ai}^T \tilde{R} \ge - \lVert F_{ai} - \hat{F}_{ai}\rVert_\infty \lVert \tilde{R} \rVert_1 + F_{ai}^T \tilde{R}\ge -\varepsilon \lVert \tilde{R} \rVert_1 + K= -\frac{K\varepsilon}{\beta} +K = K \left(1-\frac{\varepsilon}{\beta}\right) 
\]

We now construct $\tilde{R}$ to satisfy the constraints of the optimization problem \ref{P1Problem} with $\hat{F}_{ai}$ by choosing $K$ such that
\[
\hat{F}_{ai}^T \tilde{R} \ge K \left(1-\frac{\varepsilon}{\beta}\right) \ge 1
\implies K = \frac{1}{1-\frac{\varepsilon}{\beta}}
\]

Notice here since we have $K>0$, then $\varepsilon<\beta$

Now since $\tilde{R}$ is a feasible solution to the optimization problem \ref{P1Problem} with $\hat{F}_{ai}$ for which $\hat{R}$ is the optimal solution, we have from the objective function
\[
\lVert \hat{R}\rVert_1 \le \lVert \tilde{R}\rVert_1 = \frac{K}{\beta}
\]
Substituting this upper bound for $\lVert \hat{R}\rVert_1$ in (\ref{EqWit}) we get,
\[
F_{ai}^T \hat{R} \ge -\varepsilon \frac{K}{\beta} + 1= 1-  \frac{\varepsilon}{\beta}\left(\frac{1}{1-\frac{\varepsilon}{\beta}}\right)\ge 1 - \frac{1-c}{2-c} \left(\frac{1}{1-\frac{1-c}{2-c}}\right) = 1 - \frac{1-c}{2-c} (2-c) = c
\]

\end{proof}

\begin{remark}
	It is important to note that since $K, \beta > 0$ and $\varepsilon\ge0$ and $c\le 1-\varepsilon \frac{K}{\beta}$, we have $c\le 1$ with equality holding only when $\varepsilon = 0$, i.e infinitely many samples. This shows the equivalence of the problems with the true and the estimated transitions probabilities in the case of infinite samples.
\end{remark}

Our desired result then follows as a corollary of the above theorem.

\begin{corollary} \label{CorrResult}
	Let $\{S,A,P_a,\gamma\}$ be an inverse reinforcement learning problem that is $\beta$- strictly separable. Let $\hat{F}_{ai}$ be the values of $F_{ai}$ using estimates of the transition probability matrices such that $\lVert F_{ai} - \hat{F}_{ai}\rVert_\infty \le \varepsilon$. Let $\hat{R}$ be the solution to the optimization problem \ref{P1Problem} with $\hat{F}_{ai}$. 
\[
\varepsilon \le \frac{1}{2} \beta
\]
Then we have $F_{ai}^T \hat{R}\ge 0 \;\;\; \forall a\in A\setminus a_1, i = 1,\ldots, n$.
\end{corollary}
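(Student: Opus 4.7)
The plan is to obtain this corollary as a one-line specialization of Theorem \ref{ThmEpsBeta}. That theorem guarantees $F_{ai}^T \hat{R} \ge c$ for every $a \in A\setminus\{a_1\}$ and every $i$, under the hypothesis $\varepsilon \le \frac{1-c}{2-c}\beta$, for any parameter $c$ with $0 \le c \le 1$. The desired conclusion of the corollary is exactly the $c = 0$ case of this statement.

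So the proof proceeds by invoking Theorem \ref{ThmEpsBeta} with $c = 0$. First, I would check that this is a permitted value of the parameter, which it is since the theorem's hypothesis allows $c \ge 0$. Next, I would substitute $c = 0$ into the hypothesis $\varepsilon \le \frac{1-c}{2-c}\beta$ and observe that this collapses to $\varepsilon \le \frac{1}{2}\beta$, matching the assumption of the corollary exactly. Finally, substituting $c = 0$ into the conclusion $F_{ai}^T \hat{R} \ge c$ yields $F_{ai}^T \hat{R} \ge 0$ for all $a \in A\setminus\{a_1\}$ and $i = 1,\ldots,n$, which is the claim.

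There is no substantive obstacle here: Theorem \ref{ThmEpsBeta} has done all the work, and the corollary simply names the choice $c = 0$ that recovers Bellman optimality (rather than some stronger margin) with respect to the true transition probabilities. If one wanted strict Bellman optimality one would pick a small $c > 0$ and pay for it with a tighter bound on $\varepsilon$, but for the stated non-strict conclusion the endpoint $c = 0$ suffices.
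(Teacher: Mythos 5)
Your proposal is correct and is exactly the paper's own argument: the corollary is proved by setting $c=0$ in Theorem \ref{ThmEpsBeta}, under which the hypothesis becomes $\varepsilon \le \frac{1}{2}\beta$ and the conclusion becomes $F_{ai}^T \hat{R} \ge 0$. Nothing further is needed.
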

\begin{proof}
	Straightforwardly, by setting $c=0$ in Theorem \ref{ThmEpsBeta}.
\end{proof}

\begin{theorem} \label{FResult}
	Let $\{S,A,P_a,\gamma\}$ be an inverse reinforcement learning problem that is $\beta$- strictly separable. Let every state be reachable from the starting state in one step with probability at least $\alpha$. Let $\hat{R}$ be the solution to the optimization problem \ref{P1Problem} with $\hat{F}_{ai}$ with transition probability matrices $\hat{P}_a$ that are maximum likelihood estimates of  $P_a$ formed from $m$ samples where 
	\[
	m \ge \frac{64}{\alpha \beta^2} \left(\frac{(n-1)\gamma + 1}{(1-\gamma)^2}\right)^2 \log{\frac{4nk}{\delta}}
	\]
	Then with probability at least $(1-\delta)$, we have $F_{ai}^T \hat{R}\ge 0 \;\;\; \forall a\in A\setminus a_1, i = 1,\ldots, n$.
	
\end{theorem}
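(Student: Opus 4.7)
The plan is to reduce the statistical theorem to the deterministic guarantee already proved in Corollary~\ref{CorrResult}: if we can ensure $\|F_{ai} - \hat{F}_{ai}\|_\infty \le \beta/2$ simultaneously for every $a\in A\setminus a_1$ and $i=1,\dots,n$ with probability at least $1-\delta$, then the desired conclusion $F_{ai}^T\hat{R}\ge 0$ follows immediately. So the whole theorem becomes a uniform perturbation question plus a concentration bound.

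First I would do a deterministic perturbation analysis. Write $D_a^i := P_{a_1}(i) - P_a(i)$ and $M := (I-\gamma P_{a_1})^{-1}$, with empirical analogues $\hat{D}_a^i$ and $\hat{M}$. Then
\[
F_{ai} - \hat{F}_{ai} = (D_a^i - \hat{D}_a^i)\,M \;+\; \hat{D}_a^i\,(M - \hat{M}),
\]
and the resolvent identity $M - \hat{M} = \gamma M(P_{a_1}-\hat{P}_{a_1})\hat{M}$ lets me convert the second term into another product of a stochastic-matrix perturbation with two bounded inverses. Using the Neumann series $M = \sum_{t\ge 0}\gamma^t P_{a_1}^t$ with $P_{a_1}$ right stochastic gives $\vertiii{M}_\infty \le 1/(1-\gamma)$, and similarly $\vertiii{\hat{M}}_\infty \le 1/(1-\gamma)$. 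Combining these with $\|D_a^i\|_1 \le 2$ entry-wise, I expect a bound of the form
\[
\|F_{ai} - \hat{F}_{ai}\|_\infty \;\le\; C(n,\gamma)\,\max_{a}\|P_a - \hat{P}_a\|_\infty,\qquad C(n,\gamma) \;=\; O\!\left(\frac{(n-1)\gamma + 1}{(1-\gamma)^2}\right).
\]

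Next, I would handle the statistical side. The reachability assumption guarantees that every state $i$ is visited from the start with probability at least $\alpha$ per trajectory, so a Chernoff bound yields at least $\alpha m/2$ observed transitions from each state under each action with probability at least $1-\delta/2$ after a union bound over the $nk$ state-action pairs. Conditioned on this, a row-wise Hoeffding inequality applied to each empirical entry $\hat{P}_a(i,j)$, followed by a union bound over the $n^2 k$ triples $(a,i,j)$, gives $\max_a\|\hat{P}_a - P_a\|_\infty \le t$ with probability at least $1-\delta/2$, provided $m \gtrsim t^{-2}\alpha^{-1}\log(nk/\delta)$. Setting $t = \beta/(2C(n,\gamma))$ so that $C(n,\gamma)\,t \le \beta/2$ invokes Corollary~\ref{CorrResult} and produces a sample-complexity requirement of the stated form, with the $\log(4nk/\delta)$ factor arising from merging the reachability and concentration failure budgets.

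The main obstacle is the perturbation step: the naive bound gives a constant of order $n/(1-\gamma)^2$, while the theorem requires the sharper $((n-1)\gamma+1)/(1-\gamma)^2$. Recovering that factor requires exploiting that $D_a^i$ sums to zero (so only mean-subtracted columns of $M$ contribute to $D_a^iM$) and that each row of $M$ sums to exactly $1/(1-\gamma)$, which controls how $D_a^i \hat{M}$ interacts with $(P_{a_1}-\hat{P}_{a_1})$. Threading the exact numerical constants through the entry-wise bounds, Hoeffding step, and Chernoff reachability step is what pins down the constant $64$ in the final expression.
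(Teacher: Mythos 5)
Your overall architecture is exactly the paper's: reduce to the deterministic guarantee of Corollary~\ref{CorrResult} with tolerance $\beta/2$, propagate a bound on $\max_a\lVert \hat P_a - P_a\rVert_\infty$ through to a bound on $\lVert F_{ai}-\hat F_{ai}\rVert_\infty$, and close with a concentration-plus-union-bound argument over the $nk$ state--action pairs (the paper does this via Theorems~\ref{FConc} and~\ref{ThmAfO}). Where you diverge is in both sub-lemmas. For the perturbation step the paper expands $(I-\gamma P_{a_1})^{-1}$ as a Neumann series and controls each power via Lemma~\ref{LemPn} ($\lVert \hat P^k - P^k\rVert_\infty \le ((k-1)n+1)\varepsilon$), whereas you use the resolvent identity $M-\hat M = \gamma M(P_{a_1}-\hat P_{a_1})\hat M$; both are legitimate, and the resolvent route is arguably cleaner, but it is slightly lossier in the constant (see below). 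For the statistical step the paper uses the Dvoretzky--Kiefer--Wolfowitz inequality on each row's CDF (union bound over $nk$ rows) together with a Hoeffding bound on the number of visits, whereas you propose entrywise Hoeffding with a union bound over $n^2k$ entries; this again works but inflates the log factor to $\log(n^2k/\delta)$ rather than $\log(4nk/\delta)$.

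The one genuine gap is the exact constant, which you yourself flag as "the main obstacle" and do not resolve. The mechanism you suggest for recovering $((n-1)\gamma+1)/(1-\gamma)^2$ --- exploiting that $D_a^i$ sums to zero and that rows of $M$ sum to $1/(1-\gamma)$ --- is not what produces that constant in the paper. The actual source is that in the series $\sum_{j\ge 0}\gamma^j(\hat P^j - P^j)$ the $j=0$ term vanishes and the $j$-th term costs $((j-1)n+1)\varepsilon$ rather than $jn\varepsilon + \varepsilon$ uniformly, so the factor $n$ only ever appears multiplied by $\gamma$; summing the geometric series then yields $n\gamma/(1-\gamma)^2 + 1/(1-\gamma) = ((n-1)\gamma+1)/(1-\gamma)^2$. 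If you carry out your resolvent decomposition carefully you get a bound of order $2n\varepsilon/(1-\gamma)^2$ (or, with the $j=0$ term split off, roughly $2\varepsilon(1+(n-1)\gamma(2-\gamma))/(1-\gamma)^2$), which is strictly weaker than the paper's $2\varepsilon((n-1)\gamma+1)/(1-\gamma)^2$ whenever $\gamma$ is small --- e.g.\ for $n=5$, $\gamma=0.1$ the two differ by roughly a factor of $3.5$. Your argument therefore proves the theorem with a larger constant in place of $64$ and a modified $n$-dependence inside the square, i.e.\ the same $O\bigl(\tfrac{n^2}{\alpha\beta^2}\log(nk/\delta)\bigr)$ sample complexity, but not the literal statement. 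To get the stated bound you would need to replace the resolvent step with the term-by-term Neumann-series accounting of Theorem~\ref{FConc}, and the entrywise union bound with a per-row uniform bound such as DKW.
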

The theorem above follows from concentration inequalities for the estimation of the transition probabilities, which are detailed in the following section. (All missing proofs are included in the Supplementary Material.)

\section{Concentration inequalities}
In this section we look at the propagation of the concentration of the empirical estimate of the transition probabilities around their true values.

\begin{lemma}\label{LemIndNorm}
Let A and B be two matrices, we have 
$$
\rVert AB\rVert_\infty \le \vertiii{A}_\infty \lVert B \rVert_\infty
$$
\end{lemma}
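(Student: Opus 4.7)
The plan is to prove this entrywise, since the norm $\lVert \cdot \rVert_\infty$ on the left is the elementwise max rather than an induced operator norm. I would start by fixing arbitrary indices $i,j$ and writing out the $(i,j)$ entry of the product as $(AB)_{ij} = \sum_{k} A_{ik} B_{kj}$, then apply the triangle inequality to get $|(AB)_{ij}| \le \sum_{k} |A_{ik}|\,|B_{kj}|$.

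Next, I would bound each $|B_{kj}|$ uniformly by $\lVert B\rVert_\infty = \sup_{k,j}|B_{kj}|$ and pull this constant out of the sum. What remains is $\sum_{k}|A_{ik}| = \lVert a_i\rVert_1$, which is the $\ell^1$ norm of the $i$-th row of $A$. By definition of the induced norm $\vertiii{A}_\infty = \sup_i \lVert a_i\rVert_1$, this is bounded by $\vertiii{A}_\infty$ (one should note that the paper's definition is in terms of rows, consistent with the right-stochastic-matrix remark $\vertiii{P}_\infty = 1$). Combining these steps gives $|(AB)_{ij}| \le \vertiii{A}_\infty\,\lVert B\rVert_\infty$.

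Finally, since the bound is uniform in $i,j$, taking the supremum over $(i,j)$ on the left yields $\lVert AB\rVert_\infty \le \vertiii{A}_\infty\,\lVert B\rVert_\infty$, which is the desired inequality.

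There is no real obstacle here: the proof is a one-line computation, and the only subtlety is being careful about which of the two norms on matrices is meant at each occurrence. The lemma essentially serves as a Hölder-type bookkeeping result that will be chained in the subsequent concentration arguments for propagating the $\lVert \cdot\rVert_\infty$ error from $\hat P_a - P_a$ through products like $(\hat P_{a_1}(i)-\hat P_a(i))(I-\gamma \hat P_{a_1})^{-1}$ that appear in $\hat F_{ai}$.
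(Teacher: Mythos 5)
Your proof is correct and follows essentially the same route as the paper's: expand the $(i,j)$ entry of $AB$, apply the triangle/H\"older inequality to bound it by $\bigl(\sum_k |A_{ik}|\bigr)\sup_{k,j}|B_{kj}| \le \vertiii{A}_\infty \lVert B\rVert_\infty$, and take the supremum over $(i,j)$. Your write-up is in fact cleaner about which norm is which at each step than the paper's own (which has some index typos), but there is no substantive difference.
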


Next we look at the propagation of the concentration of a right stochastic matrix $P$ to the concentration of its $k$-th power.

\begin{lemma}\label{LemPn}
	Let $P$ be a $n\times n$ right stochastic matrix and let $\hat{P}$ be an estimate of $P$ such that
	$$
	\lVert \hat{P} - P \rVert_\infty \le \varepsilon
	$$
	then,
	$$
	\lVert \hat{P}^k - P^k \rVert_\infty \le ((k-1)n + 1)\varepsilon
	$$
	
\end{lemma}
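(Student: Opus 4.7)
The plan is to proceed by induction on $k$. The base case $k=1$ is immediate since $\|\hat{P}-P\|_\infty \le \varepsilon = (0\cdot n+1)\varepsilon$.

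For the inductive step, I would use the telescoping decomposition
\[
\hat{P}^k - P^k = \hat{P}\bigl(\hat{P}^{k-1}-P^{k-1}\bigr) + \bigl(\hat{P}-P\bigr)P^{k-1},
\]
which is readily verified by expanding and cancelling the cross term $\hat{P}P^{k-1}$. The triangle inequality then reduces the problem to bounding each of the two summands separately.

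For the first summand, I would invoke Lemma \ref{LemIndNorm} to obtain
\[
\bigl\|\hat{P}\bigl(\hat{P}^{k-1}-P^{k-1}\bigr)\bigr\|_\infty \le \vertiii{\hat{P}}_\infty \bigl\|\hat{P}^{k-1}-P^{k-1}\bigr\|_\infty.
\]
Since $\hat{P}$ is the maximum likelihood estimate obtained from normalized empirical counts, it is itself right stochastic, so $\vertiii{\hat{P}}_\infty = 1$. The inductive hypothesis then yields a bound of $((k-2)n+1)\varepsilon$ on this term.

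For the second summand, directly applying Lemma \ref{LemIndNorm} in the same way would require control of $\vertiii{\hat P - P}_\infty$, which can be as large as $n\varepsilon$, so I would instead bound it entrywise. For any $i,\ell$,
\[
\Bigl|\bigl((\hat{P}-P)P^{k-1}\bigr)_{i\ell}\Bigr| = \Bigl|\sum_j (\hat{P}-P)_{ij}(P^{k-1})_{j\ell}\Bigr| \le \varepsilon \sum_j (P^{k-1})_{j\ell} \le n\varepsilon,
\]
where the final inequality uses that the entries of the right stochastic matrix $P^{k-1}$ lie in $[0,1]$. Summing the two bounds gives $((k-2)n+1)\varepsilon + n\varepsilon = ((k-1)n+1)\varepsilon$, which closes the induction.

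The main subtlety is the asymmetric treatment of the two terms: Lemma \ref{LemIndNorm} is phrased so that it pulls out the induced infinity norm of the left factor, which is useful for the first summand (where the left factor $\hat{P}$ is stochastic) but not for the second summand (where the left factor $\hat{P}-P$ has large induced norm). Getting the constant $(k-1)n+1$ rather than something like $kn$ relies on choosing this specific decomposition so that only one of the two pieces contributes the factor of $n$ per step.
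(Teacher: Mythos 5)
Your proof is correct and follows essentially the same route as the paper: the identical telescoping decomposition $\hat{P}^k-P^k=\hat{P}(\hat{P}^{k-1}-P^{k-1})+(\hat{P}-P)P^{k-1}$, induction on $k$, and the induced-norm bound on the first summand. The only cosmetic difference is in the second summand, which you bound entrywise while the paper uses $\vertiii{\hat{P}-P}_\infty\le n\lVert\hat{P}-P\rVert_\infty$ together with $\lVert P^{k-1}\rVert_\infty\le1$; both give the same $n\varepsilon$, so your worry that the lemma-based route would lose something there is unfounded, but your bound is equally valid.
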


Now we can consider the concentration of the expression $F_{ai} = (P_{a_1}(i) - P_a (i))(I-\gamma P_{a_1} )^{-1} $.

Notice that since $P$ is a right stochastic matrix and $\gamma < 1$, we can expand $(I-\gamma P_{a_1} )^{-1}$ as
$(I-\gamma P_{a_1} )^{-1} = \sum_{j=0}^{\infty} \left(\gamma P_{a_1}\right)^j$ and therefore
\[
(P_{a_1}(i) - P_a (i))(I-\gamma P_{a_1} )^{-1} = (P_{a_1}(i) - P_a (i)) \sum_{j=0}^{\infty} \left(\gamma P_{a_1}\right)^j 
\]

\begin{theorem} \label{FConc}
	Let $P_a$ and $P_{a_1}$ be $n\times n$  right stochastic matrices corresponding to actions $a$ and $a_1$ and let $\gamma < 1$. Let $\hat{P}_a$ and $\hat{P}_{a_1}$ be estimates of $P_a$ and $P_{a_1}$ such that 
	
	\[\lVert \hat{P}_a - P_a\rVert_\infty \le \varepsilon \;\;\;\text{and} \;\;\;\lVert \hat{P}_{a_1} - P_{a_1}\rVert_\infty \le \varepsilon \]
	
	Then, $\forall a,a_1 \in A$

\[
\biggl\lVert (\hat{P}_{a_1} - \hat{P}_a)(I-\gamma \hat{P}_{a_1} )^{-1} - (P_{a_1} - P_a)(I-\gamma P_{a_1} )^{-1} \biggr\rVert_\infty \\ \le 2\varepsilon \frac{(n-1)\gamma + 1}{(1-\gamma)^2}
\]

\end{theorem}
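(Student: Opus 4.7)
The plan is to reduce the bound on the difference of $(P_{a_1} - P_a)(I - \gamma P_{a_1})^{-1}$ and its hatted counterpart to termwise bounds on differences of powers of right-stochastic matrices, which is exactly the content of Lemma \ref{LemPn}. First I would expand both inverses as Neumann series $(I - \gamma P_{a_1})^{-1} = \sum_{j=0}^\infty \gamma^j P_{a_1}^j$ (valid because $\gamma<1$ and both $P_{a_1}, \hat{P}_{a_1}$ are right stochastic with spectral radius $1$), rewriting the quantity of interest as
\[
\sum_{j=0}^\infty \gamma^j \Bigl[(\hat{P}_{a_1} - \hat{P}_a)\hat{P}_{a_1}^j - (P_{a_1} - P_a)P_{a_1}^j\Bigr].
\]
Distributing the product, the $j$-th bracket equals $(\hat{P}_{a_1}^{j+1} - P_{a_1}^{j+1}) - (\hat{P}_a\hat{P}_{a_1}^j - P_a P_{a_1}^j)$, so a triangle inequality over $j$ reduces the problem to bounding these two matrix differences separately for each $j$.

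For the first difference, Lemma \ref{LemPn} gives $\lVert \hat{P}_{a_1}^{j+1} - P_{a_1}^{j+1} \rVert_\infty \le (jn+1)\varepsilon$. For the second, I would telescope once more, writing
\[
\hat{P}_a\hat{P}_{a_1}^j - P_a P_{a_1}^j = \hat{P}_a(\hat{P}_{a_1}^j - P_{a_1}^j) + (\hat{P}_a - P_a)P_{a_1}^j.
\]
The first summand is handled by Lemma \ref{LemIndNorm} together with Lemma \ref{LemPn}: since $\hat{P}_a$ is right stochastic, $\vertiii{\hat{P}_a}_\infty = 1$, so its norm is at most $((j-1)n+1)\varepsilon$. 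The second summand requires a direct entrywise calculation, because the stochastic matrix now sits on the right and Lemma \ref{LemIndNorm} does not apply in the useful direction; bounding each entry by $\sum_k |(\hat{P}_a - P_a)_{ik}|\cdot|(P_{a_1}^j)_{kl}| \le n\varepsilon$ yields $\lVert (\hat{P}_a - P_a)P_{a_1}^j \rVert_\infty \le n\varepsilon$. Adding the contributions, each bracket has $\ell_\infty$ norm at most $2(jn+1)\varepsilon$.

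Finally I would sum the resulting series,
\[
\sum_{j=0}^\infty 2(jn+1)\gamma^j\varepsilon = 2\varepsilon\left(\frac{n\gamma}{(1-\gamma)^2} + \frac{1}{1-\gamma}\right) = 2\varepsilon\cdot\frac{(n-1)\gamma + 1}{(1-\gamma)^2},
\]
which is exactly the bound in the statement. The main obstacle is the factor $(\hat{P}_a - P_a)P_{a_1}^j$ with the stochastic matrix on the right: the induced-norm lemma does not control it directly, and the crude entrywise bound used instead is precisely where the factor of $n$ (and hence the $(n-1)\gamma$ in the final expression) enters; sharpening it would require exploiting additional structure such as the row-sum-zero property of $\hat{P}_a - P_a$, which is not needed to reach the stated bound.
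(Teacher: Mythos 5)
Your proof is correct and follows essentially the same route as the paper: Neumann expansion of both resolvents, the same telescoping decomposition into $(\hat{P}_{a_1}^{j+1}-P_{a_1}^{j+1})$, $\hat{P}_a(\hat{P}_{a_1}^{j}-P_{a_1}^{j})$, and $(\hat{P}_a-P_a)P_{a_1}^{j}$, Lemmas \ref{LemIndNorm} and \ref{LemPn} for the per-term bounds, and the same geometric-series summation to $2\varepsilon\,\frac{(n-1)\gamma+1}{(1-\gamma)^2}$. The only cosmetic difference is at the term $(\hat{P}_a - P_a)P_{a_1}^j$, where the paper does invoke Lemma \ref{LemIndNorm} (via $\vertiii{\hat{P}_a - P_a}_\infty \le n\varepsilon$ and $\lVert P_{a_1}^j\rVert_\infty \le 1$) rather than your direct entrywise computation, but the two are the identical estimate.
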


Note that this result is for each action. The concentration over all actions can be found by using the union bound over the set of actions.

An estimate of the value of $\varepsilon$ when the estimation is done using $m$ samples can be shown using the Dvoretzky-Kiefer-Wolfowitz inequality \cite{DKW} to be on the order of
$
\varepsilon \in O\left(\sqrt{\frac{2\log{\frac{2n}{\delta}}}{m}}\right)
$.

This result is shown in the following Theorem \ref{DKW}.

\begin{theorem}\label{DKW}
	Let $P_a$ be a $n\times n$ right stochastic matrix for an action $a\in A$ and let $\hat{P}_a$ be an maximum likelihood estimate of $P_a$ formed from $m$ samples. If $m\ge \frac{2}{\varepsilon^2} \log{\frac{2n}{\delta}}$, then we have
	$$
	\mathbb{P}\left[\left\lVert \hat{P}_a - P_a \right\rVert _\infty\le \varepsilon \right] \ge 1-\delta
	$$
\end{theorem}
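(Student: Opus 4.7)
The plan is to reduce a bound on $\|\hat{P}_a - P_a\|_\infty$ to $n$ one-dimensional empirical-CDF deviations, one per row of $P_a$, by applying the DKW inequality to each row and then taking a union bound. For each state $i$, the MLE $\hat{P}_a(i)$ is formed from $m$ i.i.d.\ samples of the next state drawn from the true distribution $P_a(i)$ on $\{1,\ldots,n\}$, so per row we are in precisely the i.i.d.\ univariate setting that DKW requires, and the matrix infinity norm used in the paper is the entrywise maximum $\max_{i,j}|\hat{P}_a(i,j)-P_a(i,j)|$.

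For a fixed row $i$, I would pick any ordering of the $n$ states and let $F_i, \hat{F}_i$ denote the associated true and empirical CDFs. The DKW inequality then gives
\[
\mathbb{P}\bigl[\sup_j |\hat{F}_i(j) - F_i(j)| > \eta\bigr] \le 2\exp(-2m\eta^2).
\]
Since $\hat{P}_a(i,j) - P_a(i,j) = \bigl(\hat{F}_i(j) - F_i(j)\bigr) - \bigl(\hat{F}_i(j-1) - F_i(j-1)\bigr)$, the entrywise PMF deviation in row $i$ is bounded by twice the CDF deviation, so choosing $\eta = \varepsilon/2$ yields
\[
\mathbb{P}\bigl[\sup_j |\hat{P}_a(i,j) - P_a(i,j)| > \varepsilon\bigr] \le 2\exp(-m\varepsilon^2/2).
\]

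A union bound over the $n$ rows then gives $\mathbb{P}[\|\hat{P}_a - P_a\|_\infty > \varepsilon] \le 2n\exp(-m\varepsilon^2/2)$, and plugging in the hypothesis $m \ge (2/\varepsilon^2)\log(2n/\delta)$ drives this right-hand side below $\delta$. I do not anticipate any serious obstacle; the only mild subtlety is the factor-of-two loss in passing from the DKW CDF bound to an entrywise PMF bound, which is what forces the rate $\exp(-m\varepsilon^2/2)$ in place of $\exp(-2m\varepsilon^2)$ and pins down the leading constant $2$ in the sample-complexity statement.
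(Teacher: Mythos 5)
Your proposal is correct and follows essentially the same route as the paper: apply the Dvoretzky--Kiefer--Wolfowitz inequality to the empirical CDF of each row, convert the CDF deviation to an entrywise PMF deviation at the cost of a factor of two (hence the rate $2e^{-m\varepsilon^2/2}$), and union bound over the $n$ rows to obtain $m \ge \frac{2}{\varepsilon^2}\log\frac{2n}{\delta}$. No gaps.
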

The theorem above assumes that it is possible to start in any given state. However, this may not always be the case. In this case, as long as every state is reachable from an initial state with probability at least $\alpha$, the result presented in Theorem \ref{FResult} can be modified to use Theorem \ref{ThmAfO} instead of Theorem \ref{DKW}.
\begin{theorem}\label{ThmAfO}
	Let $P_a$ be a $n\times n$ right stochastic matrix for an action $a\in A$ and let $\hat{P}_a$ be an maximum likelihood estimate of $P_a$ formed from $m$ samples. Let every state be reachable from the starting state in one step with probability at least $\alpha$. If $m\ge \frac{4}{\alpha \varepsilon^2} \log{\frac{4nk}{\delta}}$ then
	$$
	\mathbb{P}\left[\left\lVert \hat{P}_a - P_a \right\rVert _\infty\le \varepsilon \right] \ge 1-\delta, \; \delta\in(0,1) \forall a\in A
	$$
\end{theorem}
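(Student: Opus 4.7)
The plan is to reduce Theorem~\ref{ThmAfO} to Theorem~\ref{DKW} by conditioning on the event that every state is visited sufficiently often, since the sampler is no longer free to choose its starting state. The two ingredients are a multiplicative Chernoff bound on per-state visit counts and the DKW-type bound of Theorem~\ref{DKW}, combined by a union bound over the $(i,a)$ pairs.

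First, I would fix $(i,a)$ and let $N_{i,a}$ denote the number of samples used to form row $i$ of $\hat{P}_a$. By the one-step reachability assumption, $N_{i,a}$ stochastically dominates a $\mathrm{Binomial}(m,\alpha)$ random variable, so $\mathbb{E}[N_{i,a}]\ge \alpha m$. A standard multiplicative Chernoff bound then gives
\[
\mathbb{P}\!\left[N_{i,a} < \tfrac{\alpha m}{2}\right] \;\le\; \exp\!\left(-\tfrac{\alpha m}{8}\right),
\]
and a union bound over all $nk$ pairs shows that the event $\mathcal{E}=\{N_{i,a}\ge \alpha m/2 \text{ for every } i,a\}$ holds with probability at least $1-\delta/2$ provided $m\ge \tfrac{8}{\alpha}\log\tfrac{2nk}{\delta}$, a condition subsumed by the statement's sample complexity once $\varepsilon$ is small.

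Next, condition on $\mathcal{E}$ and apply Theorem~\ref{DKW} to each row $\hat{P}_a(i)$, which is the MLE built from $N_{i,a}\ge \alpha m/2$ i.i.d.\ draws of $P_a(i)$. Taking the per-action tolerance $\delta'=\delta/(2k)$ inside Theorem~\ref{DKW} requires
\[
\tfrac{\alpha m}{2} \;\ge\; \tfrac{2}{\varepsilon^2}\log\tfrac{2n}{\delta'} \;=\; \tfrac{2}{\varepsilon^2}\log\tfrac{4nk}{\delta},
\]
which rearranges to precisely the stated $m\ge \tfrac{4}{\alpha\varepsilon^2}\log\tfrac{4nk}{\delta}$. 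A final union bound over the $k$ actions, combined with the $\delta/2$ visit-count failure from the first step, yields the overall probability $1-\delta$ claimed in the theorem.

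The main obstacle is the bookkeeping: one must apportion $\delta$ between the Chernoff failure and the DKW failure and then split the DKW budget across the $k$ actions so that the logarithmic factor collapses exactly to $\log\tfrac{4nk}{\delta}$ rather than something slightly larger. The $\varepsilon^{-2}$ factor from the DKW step dominates the $\alpha^{-1}$ factor from the Chernoff step in the regime of interest, which is why only the DKW-driven term appears in the final sample complexity.
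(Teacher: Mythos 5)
Your proposal is correct and follows essentially the same route as the paper: a concentration bound on per-state visit counts (you use a multiplicative Chernoff bound where the paper uses one-sided Hoeffding with slack $\alpha/2$, but both yield at least $\alpha m/2$ effective samples per state with high probability), followed by a conditional application of Theorem \ref{DKW} and a union bound over states and actions with the same $\delta/2$-$\delta/2$ split. Both arguments share the same minor, acknowledged caveat that the visit-count sample requirement is only subsumed by the stated bound when the DKW term dominates.
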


\section{Discussion}

The result of Theorem \ref{FResult} shows that the number of samples required to solve a $\beta$-strict separable inverse reinforcement learning problem and obtain a reward that generates the desired optimal policy is on the order of $m\in O\left(\frac{n^2}{\beta^2} \log{(nk)}\right) $. Notice that the number of samples in inversely proportional to $\beta^2$. Thus by viewing the case of Regime 2 as $\lim \beta \rightarrow 0$ of the $\beta$-strict separable case (Regime 3), it is easy to see that an infinite number of samples are required to guarantee that the reward obtained will generate the optimal policy for the MDP with the true transition probability matrices.

In practical applications, however, it may be difficult to determine if an inverse reinforcement learning problem is $\beta$-strict separable (Regime 3) or not. In this case, the result of equation (\ref{EqWit}) can be used as a witness to determine that the obtained $\hat{R}$ satisfies Bellman's optimality condition with respect to the true transition probability matrices with high probability as shown in the following remark.

\begin{remark}
	Let $\{S,A,P_a,\gamma\}$ be an inverse reinforcement learning problem. Let every state be reachable from the starting state in one step with probability at least $\alpha$. Let $\hat{R}$ be the solution to the optimization problem \ref{P1Problem} with $\hat{F}_{ai}$ with transition probability matrices $\hat{P}_a$ that are maximum likelihood estimates of  $P_a$ formed from $m$ samples and let  
	\[
\varepsilon = 2\sqrt{\frac{4}{\alpha m} \log{\frac{4nk}{\delta}}}\cdot\frac{(n-1)\gamma + 1}{(1-\gamma)^2}
	\]
	If $\lVert\hat{R}\rVert_1 \ll \frac{1}{\varepsilon}$, then with probability at least $(1-\delta)$, we have $F_{ai}^T \hat{R}\ge 0 \;\;\; \forall a\in A\setminus a_1, i = 1,\ldots, n$.
\end{remark}

\section{Experimental results}

\begin{figure}[H]
	\centering
	\minipage{0.48\textwidth}
	\includegraphics[width=\linewidth]{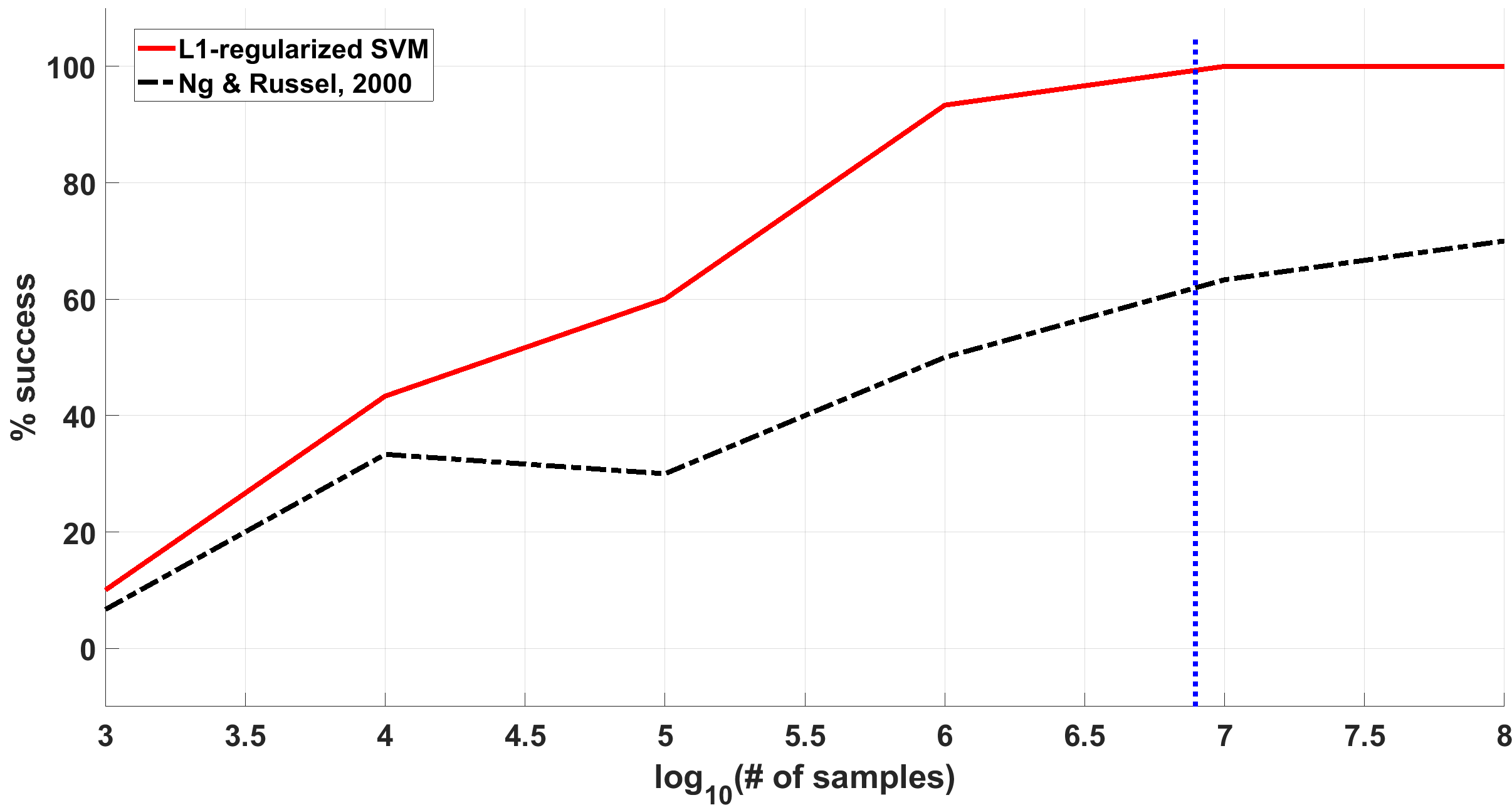}
	
	\endminipage\hfill
	\minipage{0.48\textwidth}
	\includegraphics[width=\linewidth]{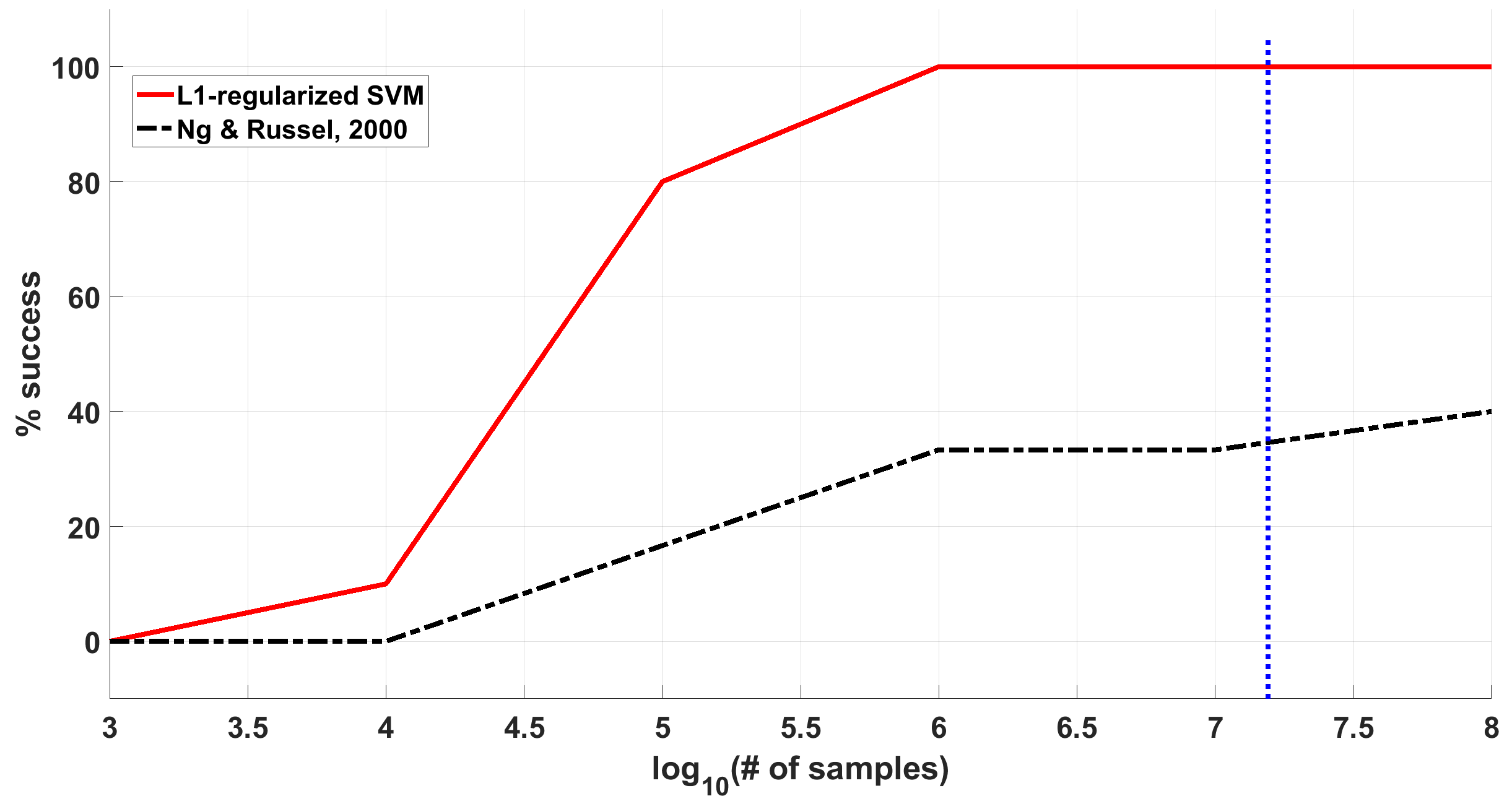}
	
	\endminipage\hfill

	\caption{ Empirical probability of success versus number of samples for an inverse reinfocement learning problem performed with $n=5$ states and $k=5$ actions (\textbf{Left}) and with $n=7$ states and $k=7$ actions (\textbf{Right}) using both our L1-regularized support vector machine formulation and the linear programming formulation proposed in \cite{IRLNg}. The vertical blue line represents the sample complexity for our method, as stated in Theorem \ref{FResult}}\label{fig:exp_5_5}
\end{figure}

Experiments were performed using randomly generated transition probability matrices for $\beta$-strictly separable MDPs with $n=5$ states, $k=5$ actions, $\gamma=0.1$ and with  $n=7$ states, $k=7$ actions, $\gamma=0.1$. Both experiments were done with $P_{a1}$ as the optimal policy. Thirty randomly generated MDPs were considered in each case and a varying number of samples were used to find estimates of the transition probability matrices in each trial. Reward functions $\hat{R}$ were found by solving Problem \ref{P1Problem} for our L1-regularized SVM formulation, and Problem \ref{NgProblem} for the method of \cite{IRLNg}, using the same set of estimated transition probabilities, i.e., $\hat{F}_{ai}$. The resulting reward functions were then tested using the true transition probabilities for $F_{ai}^T\hat{R}\ge0$. The percentage of trials for which  $F_{ai}^T\hat{R}\ge0$ held true for both of the methods is shown in Figure \ref{fig:exp_5_5} for different number of samples used. As prescribed by Theorem \ref{FResult}, for $\beta \approx 0.0032$, the sufficient number of samples for the success of our method is $O\left(\frac{n^2}{\beta^2} \log{(nk)}\right)$. As we can observe, the success rate increases with the number of samples as expected. The L1-regularized support vector machine, however, significantly outperforms the linear programming formulation proposed in \cite{IRLNg}, reaching $100\%$ success shortly after the sufficient number of samples while the method proposed by \cite{IRLNg} falls far behind. The result is that the reward function given by the L1-regularized support vector machine formulation successfully generates the optimal policy $\pi=a_1$ in almost $100\%$ of the trials given $O\left(\frac{n^2}{\beta^2} \log{(nk)}\right)$ samples while the reward function estimated by the method presented in \cite{IRLNg} fails to generate the desired optimal policy.

\section{Concluding remarks}

The L1-regularized support vector formulation along with the geometric interpretation provide a useful way of looking at the inverse reinforcement learning problem with strong, formal guarantees. Possible future work on this problem includes extension to the inverse reinforcement learning problem with continuous states by using sets of basis functions as presented in \cite{IRLNg}.

\bibliography{AbiBib2019}

\begin{thebibliography}{1}

\bibitem{DKW}
J.~Kiefer A.~Dvoretzky and J.~Wolfowitz.
\newblock Asymptotic minimax character of the sample distribution function and
  of the classical multinomial estimator.
\newblock {\em The Annals of Mathematical Statistics}, pages 642--669, 1956.

\bibitem{AbbeelNg}
Pieter Abbeel and Andrew~Y. Ng.
\newblock Apprenticeship learning via inverse reinforcement learning.
\newblock In {\em Proceedings of the Twenty-first International Conference on
  Machine Learning}, ICML '04, pages 1--, New York, NY, USA, 2004. ACM.

\bibitem{danesh2014}
Hadi Daneshmand, Manuel Gomez-Rodriguez, Le~Song, and Bernhard Scholkopf.
\newblock Estimating diffusion network structures: Recovery conditions, sample
  complexity \& soft-thresholding algorithm.
\newblock In {\em International Conference on Machine Learning}, pages
  793--801, 2014.

\bibitem{liu2008spam}
Han Liu, Larry Wasserman, John~D Lafferty, and Pradeep~K Ravikumar.
\newblock Spam: Sparse additive models.
\newblock In {\em Advances in Neural Information Processing Systems}, pages
  1201--1208, 2008.

\bibitem{IRLNg}
A.~Y. Ng and S.J. Russel.
\newblock Algorithms for inverse reinforcement learning.
\newblock In {\em International Conference on Machine Learning}, pages 663 --
  670, 2000.

\bibitem{RamaBayes}
Deepak Ramachandran and Eyal Amir.
\newblock Bayesian inverse reinforcement learning.
\newblock {\em Urbana}, 51(61801):1--4, 2007.

\bibitem{ravi2010}
Pradeep Ravikumar, Martin~J Wainwright, John~D Lafferty, et~al.
\newblock High-dimensional ising model selection using l1-regularized logistic
  regression.
\newblock {\em The Annals of Statistics}, 38(3):1287--1319, 2010.

\bibitem{wain2009}
Martin~J Wainwright.
\newblock Sharp thresholds for high-dimensional and noisy sparsity recovery
  using l1-constrained quadratic programming (lasso).
\newblock {\em IEEE transactions on information theory}, 55(5):2183--2202,
  2009.

\bibitem{zhu2004L1}
Ji~Zhu, Saharon Rosset, Robert Tibshirani, and Trevor~J Hastie.
\newblock 1-norm support vector machines.
\newblock In {\em Advances in neural information processing systems}, pages
  49--56, 2004.

\end{thebibliography}
\bibliographystyle{plain}


\clearpage
\onecolumn
\appendix
\title{Appendix: On the Sample Complexity of Inverse Reinforcement Learning}

This appendix contains the proofs for various Lemmas and Theorems presented in the paper.

\section{Proofs of Lemmas and Theorems}

\subsection{Proof of Theorem \ref{GeoThm}}
\begin{proof}
The proof follows from the fact that the points $\{F_{ai}\}$ lie on one side of the hyperplane passing through the origin given by $w^T x = 0$ if and only if 
$$
w^T F_{ai} \ge 0 \;\;\; \forall a \in A \setminus a_1, i = 1,\ldots,n
$$
or
$$
w^T F_{ai} \le 0  \;\;\; \forall a \in A \setminus a_1, i = 1,\ldots,n
$$
The proof in the 'if' direction follows by taking the hyperplane defined by $w = R$ and noticing that $w^T F_{ai} =  R^T F_{ai} =  F_{ai}^T R \ge 0$ so all the points $\{F_{ai}\}$ lie on one side of the hyperplane passing through the origin given by $R^T x = 0$

The proof in the 'only if' direction is as follows. Consider a separating hyperplane $w$. Without loss of generality,  
$$
w^TF_{ai} \ge 0
$$
Now let $R= w$ then $F_{ai}^T R = R^T F_{ai} =  w^T F_{ai} \ge 0$
so $R = w$ generates the optimal policy $\pi =  a_1$.
\end{proof}

\subsection{Proof of Theorem \ref{FResult}}

\begin{proof}
	The proof of this theorem is a consequence of Corollary \ref{CorrResult} and Theorems  \ref{FConc} and \ref{ThmAfO}. Note that from Theorem \ref{ThmAfO}, we want the concentration to hold with probability $(1-\delta)$ for all transition probability matrices corresponding to the set of actions. This can be viewed as the concentration inequality holding for a single $nk\times n$ matrix which gives us the result for $m$ samples
	$$m\ge \frac{4}{\alpha \varepsilon^2} \log{\frac{4nk}{\delta}}$$
	$$
	\implies\mathbb{P}\left[\left\lVert \hat{P}_a - P_a \right\rVert _\infty\le \varepsilon_1 \right] < 1-\delta
	$$
	The result then follows from substituting this value of $\varepsilon_1$ into the $\varepsilon$ in Theorem \ref{FConc} and the consequent result into Corollary \ref{CorrResult}.
\end{proof}

\subsection{Proof of Lemma \ref{LemIndNorm}}
\begin{proof}
	Let $C =AB$, we have $c_{ij} = \sum_{k} a_{ik} b_{kj}$
	
	$$
	\rVert AB\rVert_\infty = \rVert C\rVert_\infty =  \sup_{i,j}  \vert c_{ij} \vert
	$$
	From Holder's inequality we get
	$$
	\begin{aligned}
	\rVert AB\rVert_\infty &= \sup_{i,j}  \left\lbrace \left\lvert\sum_{k} a_{ik} b_{kj}\right\rvert \right\rbrace\\
	&\le \sup_{i,j}  \left\lbrace \left\lvert\sum_{k} a_{ik} \right\rvert \right\rbrace  \sup_{i,j}  \left\lbrace \left\lvert b_{ik} \right\rvert \right\rbrace\\
	&\le \sup_{i,j}  \left\lbrace \sum_{k} \left\lvert a_{ik} \right\rvert \right\rbrace  \sup_{i,j}  \left\lbrace \left\lvert b_{ik} \right\rvert \right\rbrace\\
	&= \vertiii{A}_\infty \lVert B \rVert_\infty
	\end{aligned}
	$$

\end{proof}

\subsection{Proof of Lemma \ref{LemPn}}
%
\begin{proof}
	First note that if $P$ is a right stochastic matrix then $P^k$ is a right stochastic matrix for all natural numbers $k$.
	Consider $n\times n$ right stochastic matrices $A,B,C,D$. 
	Consider the expression $\lVert AC-BD\rVert_\infty$ From Lemma 1, we get,
	$$
	\begin{aligned}
	\lVert AC-BD\rVert_\infty &= \lVert AC-AD+AD-BD\rVert_\infty\\
	&\le \lVert AC-AD\rVert_\infty + \lVert AD-BD\rVert_\infty\\
	&\le \vertiii{A}_\infty\lVert C-D\rVert_\infty + \vertiii{A-B}_\infty \lVert D\rVert_\infty
	\end{aligned}
	$$
	Notice that $\vertiii{A-B}_\infty\le n \lVert A-B\rVert_\infty$ and $\vertiii{A}_\infty =1$ and $\lVert D\rVert_\infty\le 1$, thus we have 
	$$
	\begin{aligned}
	\lVert AC-BD\rVert_\infty 
	&\le \lVert C-D\rVert_\infty + n\lVert A-B\rVert_\infty 
	\end{aligned}
	$$
	
	Now we will prove the lemma by induction
	$k=1$.
	We have 
	$$
	\lVert \hat{P} - P \rVert_\infty \le \varepsilon = ((1-1)n+1)\varepsilon
	$$
	
	Assume the statement for $k-1$ is true. For $k>1$ we have
	$$
	\lVert \hat{P}^{(k-1)} - P^{(k-1)} \rVert_\infty \le  (((k-1))-1)n+1)\varepsilon
	$$
	
	Consider the previous result with $A=\hat{P},\; B=P,\; C=\hat{P}^{(k-1)},\; D = P^{(k-1)}  $. Substituting, we get
	$$
	\lVert \hat{P}\hat{P}^{(k-1)} - PP^{(k-1)} \rVert_\infty \le  (((k-1))-1)n+1)\varepsilon + n\varepsilon
	$$
	$$
	\implies \lVert \hat{P}^{(k)} - P^{(k)} \rVert_\infty \le  ((k-1)n+1)\varepsilon 
	$$
	
\end{proof}

\subsection{Proof of Theorem \ref{FConc}}
%
%
%

\begin{proof}
	Consider the expression from the theorem
	
	$$
	\left\lVert (\hat{P}_{a_1} - \hat{P}_a )(I-\gamma \hat{P}_{a_1} )^{-1} - (P_{a_1} - P_a )(I-\gamma P_{a_1} )^{-1} \right\rVert_\infty
	$$
	\[
	= \biggl\lVert (\hat{P}_{a_1} - \hat{P}_a )\sum_{j=0}^{\infty} \left(\gamma \hat{P}_{a_1}\right)^j  -  (P_{a_1} - P_a )\sum_{j=0}^{\infty} \left(\gamma P_{a_1}\right)^j  \biggr\rVert_\infty
	\]
	\[
	= \biggl\lVert (\hat{P}_{a_1} - \hat{P}_a )\sum_{j=0}^{\infty} \left(\gamma \hat{P}_{a_1}\right)^j  - \hat{P}_a\sum_{j=0}^{\infty} \left(\gamma P_{a_1}\right)^j  + \hat{P}_a \sum_{j=0}^{\infty} \left(\gamma P_{a_1}\right)^j  - (P_{a_1} - P_a )\sum_{j=0}^{\infty} \left(\gamma P_{a_1}\right)^j  \biggr\rVert_\infty
	\]
	\[
	= \biggl\lVert \sum_{j=0}^{\infty}\gamma^j \left( \hat{P}_{a_1}^{j+1} - P_{a_1}^{j+1}\right)  - (\hat{P}_{a} ) \sum_{j=0}^{\infty}\gamma^j \left( \hat{P}_{a_1}^{j} - P_{a_1}^{j}\right) -  (\hat{P}_{a} - P_a) \sum_{j=0}^{\infty}\gamma^j \left( P_{a_1}^{j}\right) \biggr\rVert_\infty
	\]
	\[
	\le \sum_{j=0}^{\infty}\gamma^j \left\lVert \left( \hat{P}_{a_1}^{j+1} - P_{a_1}^{j+1}\right) \right\rVert_\infty + \sum_{j=0}^{\infty}\gamma^j\left\lVert (\hat{P}_{a} )  \left( \hat{P}_{a_1}^{j} - P_{a_1}^{j}\right)\right\rVert_\infty + \sum_{j=0}^{\infty}\gamma^j \left\lVert (\hat{P}_{a} - P_a)  \left( P_{a_1}^{j}\right) \right\rVert_\infty
	\]

	From Lemma \ref{LemIndNorm} and Lemma \ref{LemPn}; and the fact that for a right stochastic matrix $P$,
	$\vertiii{P}_\infty =1$ and $\rVert P \lVert_\infty \le 1$; we have 
	
	\[
	\sum_{j=0}^{\infty}\gamma^j \left\lVert \left( \hat{P}_{a_1}^{j+1} - P_{a_1}^{j+1}\right) \right\rVert_\infty + \sum_{j=0}^{\infty}\gamma^j\left\lVert (\hat{P}_{a} )  \left( \hat{P}_{a_1}^{j} - P_{a_1}^{j}\right)\right\rVert_\infty + \sum_{j=0}^{\infty}\gamma^j \left\lVert (\hat{P}_{a} - P_a)  \left( P_{a_1}^{j}\right) \right\rVert_\infty
	\]
	\[
	\le \sum_{j=0}^{\infty}\gamma^j \left\lVert \left( \hat{P}_{a_1}^{j+1} - P_{a_1}^{j+1}\right) \right\rVert_\infty + \sum_{j=0}^{\infty}\gamma^j \vertiii{\hat{P}_{a}}_\infty \left\lVert\left( \hat{P}_{a_1}^{j} - P_{a_1}^{j}\right)\right\rVert_\infty + \sum_{j=0}^{\infty}\gamma^j \vertiii{\hat{P}_{a} - P_a}_\infty \left\lVert\left( P_{a_1}^{j}\right) \right\rVert_\infty
	\]
	
	\[
	\le \sum_{j=0}^{\infty}\gamma^j \left\lVert \left( \hat{P}_{a_1}^{j+1} - P_{a_1}^{j+1}\right) \right\rVert_\infty +  \sum_{j=0}^{\infty}\gamma^j  \left\lVert\left( \hat{P}_{a_1}^{j} - P_{a_1}^{j}\right)\right\rVert_\infty +  \sum_{j=0}^{\infty}\gamma^j n \left\lVert\hat{P}_{a} - P_a \right\rVert_\infty
	\]
	$$
	\begin{aligned}
	&\le \sum_{j=0}^{\infty}\gamma^j ((j)n + 1)\varepsilon + \sum_{j=0}^{\infty}\gamma^j  ((j-1)n + 1)\varepsilon + \sum_{j=0}^{\infty}\gamma^j n \varepsilon\\
	&= \varepsilon\sum_{j=0}^{\infty}\gamma^j \left((jn + 1) + ((j-1)n + 1) +  n \right)\\
	&=2n\varepsilon \sum_{j=0}^{\infty}j\gamma^j  +  2\varepsilon \sum_{j=0}^{\infty}\gamma^j\\
	&=2\varepsilon \left(\frac{n\gamma}{(1-\gamma)^2} + \frac{1}{1-\gamma}\right)\\
	&= 2\varepsilon \frac{(n-1)\gamma + 1}{(1-\gamma)^2}
	\end{aligned}
	$$

\end{proof}

\subsection{Proof of Theorem \ref{DKW}}
\begin{proof}
	Here we invoke The Dvoretzky-Kiefer-Wolfowitz inequality \cite{DKW}. Consider $m$ samples of a random variable $Y_{ia}$ with domain $\{1,\ldots,n\}$, let  $y_{ia}^{(1)}, \ldots, y_{ia}^{(m)} \in \{1,\ldots, n\}$ correspond to the observed resulting state under an action $a$ taken at a state $i$. Let $\hat{T}_{ia} (s) = \frac{1}{m}\sum_{j=1}^{m} \mathds{1}\left[y_{ia}^{(j)} \le s\right]$ be an estimate of the CDF of $Y_{ia}$ and let $T_{ia}(s) = P[Y_{ia} \leq s]$ be the actual CDF. From the Dvoretzky-Kiefer-Wolfowitz inequality we have 
	
	$$
	\mathbb{P}\left(\sup_{s \in \{1,\dots,n\}} \left\lvert\hat{T}_{ia} (s) -T_{ia} (s)\right\rvert> \varepsilon \right) \le 2 e^{-2m\varepsilon^2}
	$$
	
	$$
	\implies\mathbb{P}\left(\sup_{s \in \{1,\dots,n\}} \left\lvert\hat{T}_{ia} (s) -T_{ia} (s)\right\rvert\le \varepsilon \right) > 1-2 e^{-2m\varepsilon^2}
	$$
	
	Now consider the PDF of $Y_{ia}$ given by  $\hat{\textbf{p}}_{ia}(s) = \hat{T}_{ia} (s) - \hat{T}_{ia} (s-1)$. Notice that
	
	$$
	\begin{aligned}
	\left\lvert\hat{\textbf{p}}_{ia}(s) -  \textbf{p}_{ia}(s)\right\rvert &\le \left\lvert\left(\hat{T}_{ia} (s) - \hat{T}_{ia} (s-1)\right) - \left(T_{ia} (s) - T_{ia} (s-1)\right)\right\rvert\\
	&\le \left\lvert\hat{T}_{ia} (s) -  T_{ia} (s) \right\rvert + \left\lvert\hat{T}_{ia} (s-1) - T_{ia} (s-1)\right\rvert\\
	\end{aligned}
	$$
	
	So if we have  $$\sup_{s \in \{1,\dots,n\}} \left\lvert\hat{T}_{ia} (s) -T_{ia} (s)\right\rvert\le \varepsilon$$
	
	then $$\sup_{s \in \{1,\dots,n\}} \left\lvert\hat{\textbf{p}}_{ia}(s) -  \textbf{p}_{ia}(s)\right\rvert\le 2\varepsilon$$
	
	$$
	\implies\mathbb{P}\left(\sup_{s \in \{1,\dots,n\}} \left\lvert\hat{\textbf{p}}_{ia}(s) -  \textbf{p}_{ia}(s)\right\rvert\le\varepsilon \right) > 1-2 e^{-m\varepsilon^2/2}
	$$
	
	Here we can interpret $\hat{\textbf{p}}_{ia}(\cdot)$ and $\textbf{p}_{ia}(\cdot)$  as the $i$-th rows of the matrices  $\hat{P}_a$ and $P_a$  respectively.  $\hat{\textbf{p}} (Y_{ia})$, is the maximum likelihood estimator formed from $m$ samples. From application of the union bound over all rows of the matrix $P_a$, we have for $\varepsilon >0$, and $m$ samples,  
	
	$$
	\begin{aligned}
	\mathbb{P}\left(\left(\forall i \in 1,\ldots, n\right)\lVert\hat{\textbf{p}} (Y_{ia}) - \textbf{p}(Y_{ia}) \rVert_\infty<\varepsilon\right) &> 1-2ne^{-m\varepsilon^2/2}\\
	\implies\mathbb{P}\left[\left\lVert \hat{P}_a - P_a \right\rVert _\infty\le \varepsilon \right] \ge 1-\delta, \; \delta\in(0,1)
	\end{aligned}
	$$
	if $m\ge\frac{2}{\varepsilon^2} \log{\frac{2n}{\delta}}$
\end{proof}
\subsection{Proof of Theorem \ref{ThmAfO}}
\begin{proof}
	Without loss of generality, let every state $j=1,\ldots,n$ be reachable from state $j=1$ by action $a_1$ after a step with probability at least $\alpha$. Let $Y_{ja}$ be a random variable domain $ \{1,\ldots, n\}$. Let $Z_j$ be a Bernoulli random variable such that $P(Z_j = 1)\ge \alpha \forall j$. Let $(z_j^{(1)}, y_j^{(1)}),\ldots,(z_j^{(m)}, y_j^{(m)})$ be $m$ pairs of independent samples of $Z_j$ and $Y_{aj}$. Here $Z_j$ represents the state chain $1\xrightarrow{a_1} j \rightarrow \dots$
	
	Consider the event $A_1\equiv \lbrace\frac{1}{m}\sum_{k=1}^{m}z_j^{(k)}\ge \alpha - \epsilon \forall j\rbrace$. By the one-sided Hoeffding's inequality and taking the union bound over all states we have
	$$
	\mathbb{P}(A_1)\ge 1-n e^{-2\epsilon^2 m}
	$$ 
	We also have the conditional maximum likelihood probability estimator 
	$$
	\hat{\textbf{p}}(Y_j = s|Z_j = 1) =  \frac{1}{\sum_{k=1}^{m}z_j^{(k)}}\sum_{l=1}^{m}\mathds{1}[(y_j^{(l)} = s)\wedge z_j^{(l)}]
	$$
	From Theorem \ref{DKW} we have for event 
	$$
	A_2 \equiv \lbrace\left\lVert\hat{\textbf{p}}(Y_{ja}|Z_j = 1) - \textbf{p}(Y_{ja} |Z_j = 1)\right\rVert_\infty\le \beta\rbrace
	$$
	$$
	\mathbb{P}(A_2|A_1) \ge 1-2ne^{-2\beta^2 m(\alpha -\epsilon)/2}
	$$
	By the law of total probability
	$$
	\begin{aligned}
	\mathbb{P}(A_2) \ge \mathbb{P}(A_2,A_1) &= \mathbb{P}(A_2|A_1)\mathbb{P}(A_1)\\
	&=  \left(1-n e^{-2\epsilon^2 m}\right)\left( 1-2ne^{-2\beta^2 m(\alpha -\epsilon)/2}\right)\\
	&\ge 1 - n e^{-2\epsilon^2 m} - 2ne^{-2\beta^2 m(\alpha -\epsilon)/2}
	\end{aligned}
	$$
	
	By solving $\frac{\delta}{2} = n e^{-2\epsilon^2 m}$ and $\frac{\delta}{2} = 2ne^{-2\beta^2 m(\alpha -\epsilon)/2}$ we can see that if $m\ge \max{\left\lbrace\frac{1}{2\epsilon^2}\log{\frac{2n}{\delta}}, \frac{2}{(\alpha -\epsilon)\beta^2}\log{\frac{4n}{\delta}}\right\rbrace}$ then $\mathbb{P}(A_2) \ge 1 - \delta$
	Letting $\epsilon = \frac{\alpha}{2}$ and taking the union bound over all actions $a\in A$ we have if $m\ge \frac{4}{\alpha \beta^2} \log{\frac{4nk}{\delta}}$ then
	$$
	\mathbb{P}\left[\left\lVert \hat{P}_a - P_a \right\rVert _\infty\le \beta \right] \ge 1-\delta, \; \delta\in(0,1) \forall a\in A
	$$
	
\end{proof}


\section{Experiment setup and additional results}
Experiments were performed in MATLAB using randomly generated transition probability matrices for $\beta$-strictly separable MDPs with $n$ states, $k$ actions, $\gamma$. 

We generate the rows of the transition probability matrices individually in one of two different ways. In the first method each row $P_a(i) = x \in [0,1]^n$ is generated as a uniformly sampled point from the region $\mathcal{C} := \{x\in \mathbb{R}^n \mid x\succeq0, \lVert x \rVert_1 = 1\}$. In the second method, we wanted to simulate situations where taking an action at a state would lead to a few states (say $n_1$ states) with a large probability and the remaining $n-n_1$ states with a small probability. This is similar to a situation where the action chosen is followed with a large probability and where state jumps to a random state with a small probability. This is based on the idea of following the action with a large probability and randomly "exploring" with a small probability. This is similar to transition rules used by \cite{IRLNg} in their experiment. Both of these methods were tested in generating the "true" transition probability matrices. The results shown in Figure \ref{fig:exp_5_5_supp} of this supplement were obtained using transition probability matrices generated by the first method. The results presented in Figure \ref{fig:exp_5_5} in the main paper and in Figure \ref{fig:exp_10_10_supp} of this supplement were obtained for transition probability matrices generated by the second method. We also checked all generated transition probability matrices to ensure $\beta$-separability. The maximum likelihood estimates $\hat{P}_{ai}$ of these transition probability matrices were formed by sampling trajectories under the true transition probability matrices with the action chosen uniformly at random at each state. Several trajectories were formed, each with a random initial state to ensure that each state was reachable in the simulations.

Recall that $F_{ai} = (P_{a_1}(i) - P_a(i) )(I-\gamma P_{a_1} )^{-1}$ and $\hat{F}_{ai} = (\hat{P}_{a_1}(i) - \hat{P}_a(i) )(I-\gamma \hat{P}_{a_1} )^{-1}$. Reward functions $\hat{R}$ were found by solving our L1-regularized SVM formulation, and the method of Ng \& Russel (2000), using the same set of estimated transition probabilities, i.e., $\hat{F}_{ai}$. The resulting reward functions were then tested using the true transition probabilities for $F_{ai}^T\hat{R}\ge0$. 

Additional results for $30$ repetitions of $n=5$ states, $k=5$ actions, separability $\beta=0.0032$, with transition probabilities generated using the first method are shown in Figure \ref{fig:exp_5_5_supp}. Results for $20$ repetitions of $n=10$ states, $k=10$ actions, separability $\beta=0.0032$,  with transition probabilities generated using the second method are shown in Figure \ref{fig:exp_10_10_supp}.

\begin{figure}[H]
	\centering

	\includegraphics[width=\linewidth]{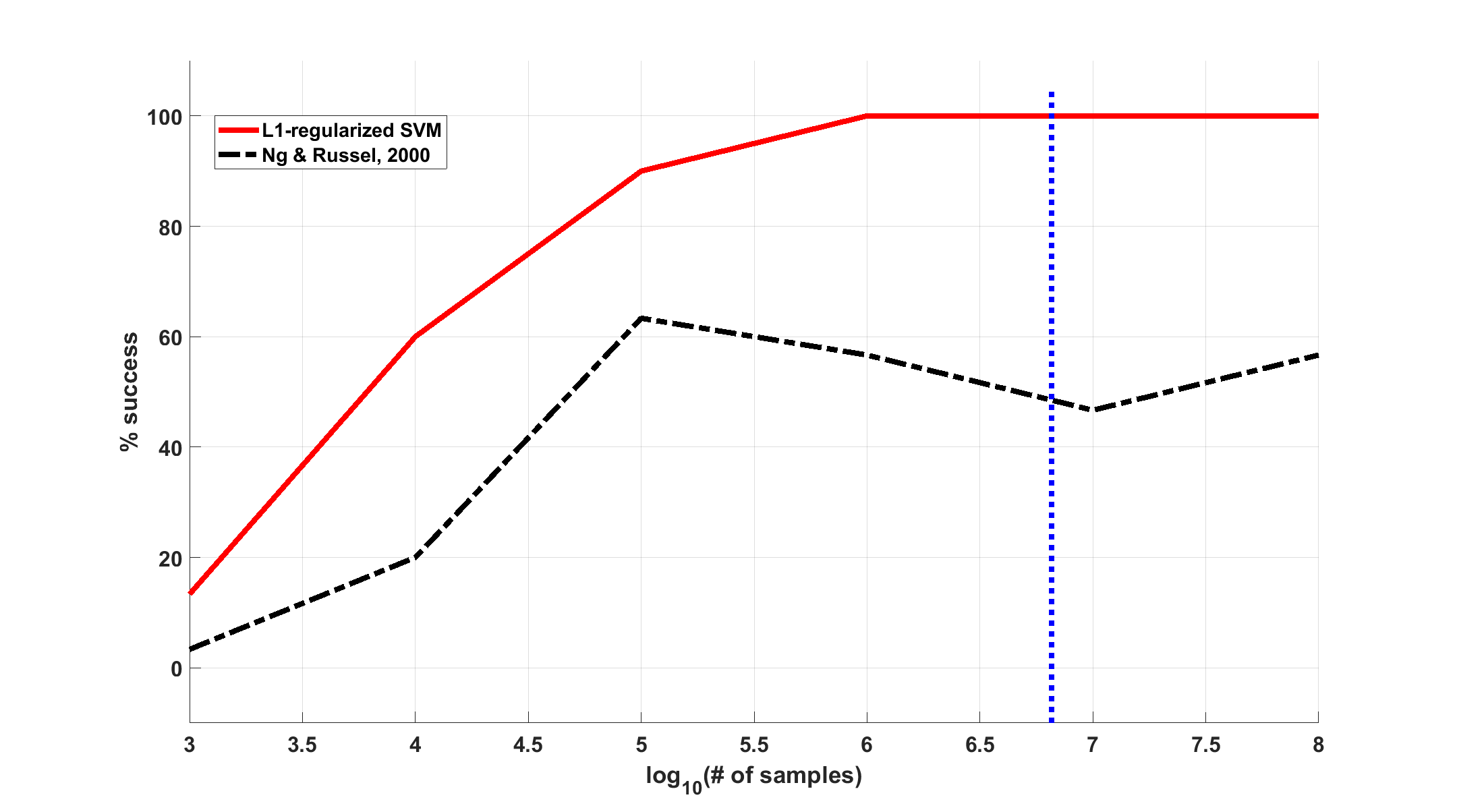}

	\caption{ Empirical probability of success versus number of samples for an inverse reinforcement learning problem performed with $n=5$ states and $k=5$ actions using both our L1-regularized support vector machine formulation and the linear programming formulation proposed in \cite{IRLNg}. The samples were generated using the first method as described in this supplement. The vertical blue line represents the sample complexity for our method }\label{fig:exp_5_5_supp}
\end{figure}

\begin{figure}[H]
	\centering

	\includegraphics[width=\linewidth]{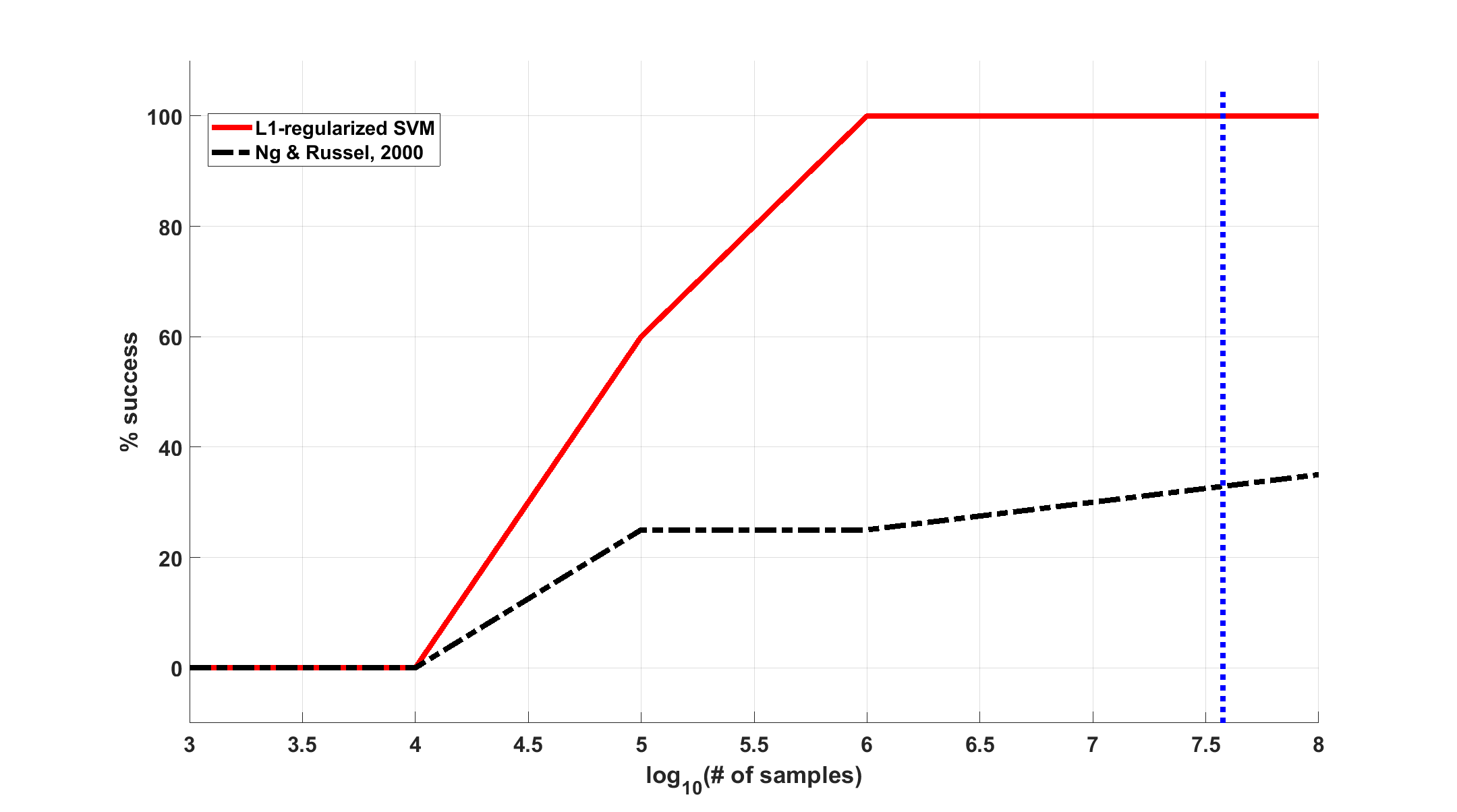}

	\caption{ Empirical probability of success versus number of samples for an inverse reinforcement learning problem performed with $n=10$ states and $k=10$ actions using both our L1-regularized support vector machine formulation and the linear programming formulation proposed in \cite{IRLNg}. The samples were generated using the second method described in this supplement (i.e., the same method used in the main paper). The vertical blue line represents the sample complexity for our method }\label{fig:exp_10_10_supp}
\end{figure}

\end{document}